\documentclass[11pt]{article}
\usepackage{amsmath,amssymb, enumerate, amsthm}
\usepackage{fullpage}
\usepackage{graphicx,color,xcolor}
\usepackage[export]{adjustbox}
\usepackage{subfig}
\usepackage{amsfonts}
\usepackage{tabularx}
\usepackage{multirow}
\usepackage[title]{appendix}
\usepackage{mathtools, commath}
\usepackage{soul}
\usepackage{hyperref}
\usepackage{tikz}
\usepackage[linesnumbered,lined,boxed,ruled,algo2e]{algorithm2e}
\usepackage{epstopdf}
\epstopdfDeclareGraphicsRule{.tiff}{png}{.png}{convert #1 \OutputFile}
\AppendGraphicsExtensions{.tiff}

\newtheorem{theorem}{Theorem}

\newtheorem{lemma}{Lemma}

\newtheorem{definition}{Definition}

\DeclareMathOperator*{\argmin}{argmin}

\renewcommand{\Tilde}{\widetilde}

\newcommand{\Cov}{\mathrm{Cov}}

\begin{document}

\title{Texture Edge detection by Patch consensus (TEP)}

\author{Guangyu Cui \thanks{School of Mathematics, Georgia Institute of Technology, Atlanta, GA, USA (gcui8@gatech.edu)} ~and Sung Ha Kang\thanks{School of Mathematics, Georgia Institute of Technology, Atlanta, GA, USA (kang@math.gatech.edu)}}

\date{}
\maketitle

\begin{abstract} 
We propose Texture Edge detection using Patch consensus (TEP) which is a training-free method to detect the boundary of texture.  We propose a new simple way to identify the texture edge location, using the consensus of segmented local patch information.  While on the boundary, even using local patch information, the distinction between textures are typically not clear, but using neighbor consensus give a clear idea of the boundary.
We utilize local patch, and its response against neighboring regions, to emphasize the similarities and the differences across different textures. The step of segmentation of response further emphasizes the edge location, and the neighborhood voting gives consensus and stabilize the edge detection.
We analyze texture as a stationary process to give insight into the patch width parameter verses the quality of edge detection.  We derive the necessary condition for textures to be distinguished, and analyze the patch width with respect to the scale of textures.  
Various experiments are presented to validate the proposed model. 
\end{abstract}



\section{Introduction}\label{sec:intro}

Texture has been explored for decades \cite{van1985texture} and fruitful results are established for different type of textures: 
Markov random field \cite{raad2018survey} is widely used for texture synthesis. Its generative feature fits well with the randomness of certain types of textures (wood surface, sand); Lattice based method \cite{liu2004computational} is powerful in modeling highly symmetric and periodic texture, when the texton is relatively well defined (wall paper, honeycomb); Frequency/Wavelet analysis \cite{livens1997wavelets, unser1995texture} utilize spacial filters to vectorize textures and plays an important role in image compression \cite{lewis1992image} and texture classification and segmentation \cite{wang1990texture, jain1991unsupervised}. We refer to \cite{tuceryan1993texture,raad2018survey} for a comprehensive review of classical models. 
Textures remains to be a challenging topic, since there is no general nor precise definition of textures, and the boundaries of different textures are especially difficult to recognize.  

We explore texture edge detection and segmentation. 
The classical Canny edge detection \cite{canny1986computational} detects edge locations by thinning the mask function, which is computed by applying thresholds to the magnitude of $|\nabla U_0|$. 
One of the most well-known variational segmentation models is the Mumford-Shah functional \cite{mumford1989optimal}, 
\begin{align}\label{eq:mumford-shah}
	E_{MS}(U, \Gamma) = \alpha\int_{\Omega\backslash\Gamma}|\nabla U|^2 dx + \beta 
 \mathcal{H}^1(\Gamma) 
 + \int_{\Omega\backslash\Gamma}(U-U_0)^2 dx.
\end{align}
Here, $\alpha$ and $\beta$ are positive parameters, $\Omega \subset \mathbb{R}$ is a bounded image domain, $U_0:\Omega\to\mathbb{R}$ is the given image, and $ \mathcal{H}^1(\Gamma)$ denotes one-dimensional Hausdorff measure of the object boundary $\Gamma$.  Chan and Vese \cite{active} proposed using level set method, and it gives very effective piece-wise constant segmentation.
For textured image segmentation, some texture descriptors, such as Gabor filter, can be used with these models.   
Gabor filter  \cite{mehrotra1992gabor} can detect localized frequency response in varied orientations and scales. 
Figure \ref{fig:intro} illustrates challenges of texture edge detection.
For a comprehensive review of classical texture segmentation models, we refer to \cite{ilea2011image}.
In real images, there are many different types of textures and it is typically difficult to find a proper filter bank that is suitable for all types of images. 
More recent network based approaches with  data-adaptive property is capable of accomplishing high level tasks as semantic segmentation, e.g., \cite{bertasius2015deepedge, 9093290, xie15hed, he2019bi, liu2017richer}, assuming the network is well-trained. 
\begin{figure}
    \centering
        \begin{tabular}{ccc}
(a) & (b) & (c) \\   
\includegraphics[width=0.25\textwidth]{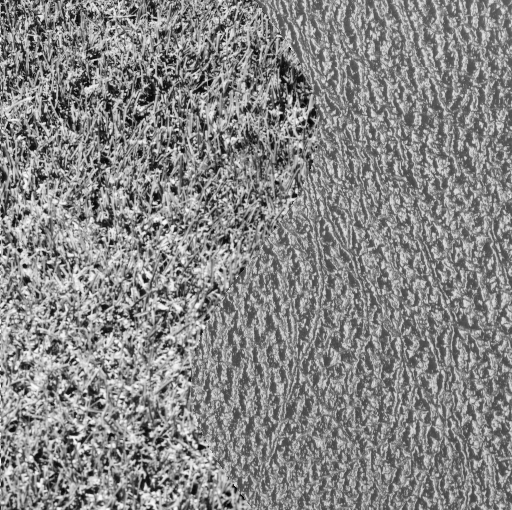}&
    \includegraphics[width=0.25\textwidth]{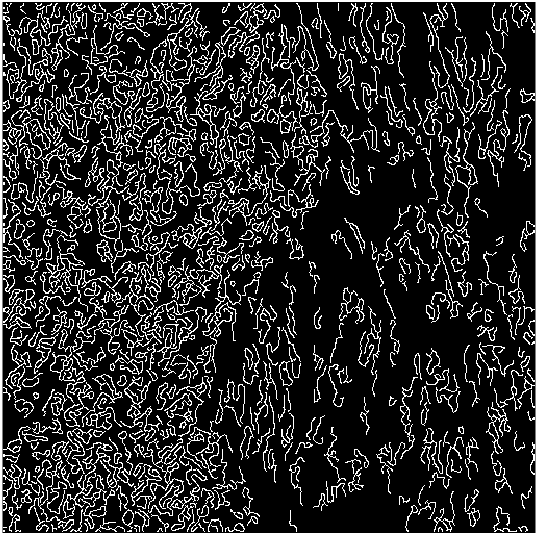}&
     \includegraphics[width=0.25\textwidth]{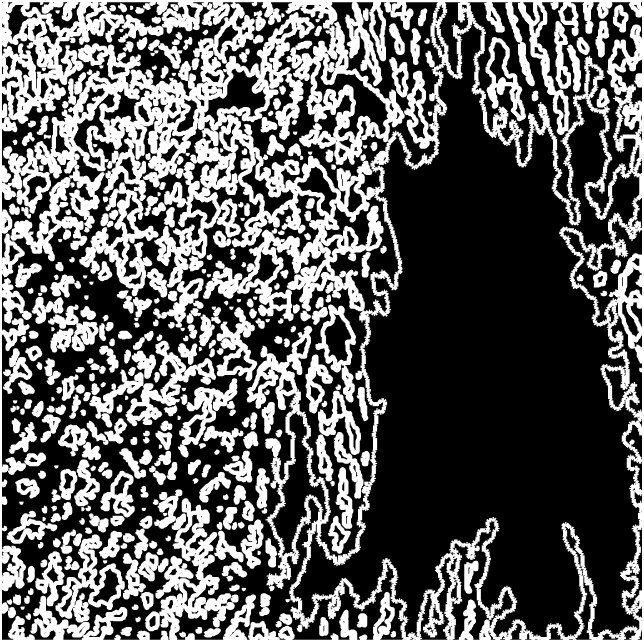}
\end{tabular}
\caption{Challenges of texture edge detection.  (a) An given image with textures.  (b) Canny edge detection. (c) Chan-Vese segmentation. \label{fig:intro}}
\end{figure}

In this paper, we propose filter-free and training-free approach utilizing local patches response to capture the similarities and the differences between textures.  Non-local filter \cite{nonlocal} for image denoising averages pixel intensity values over similar image patches effectively. 
In \cite{efros1999texture}, the authors extend this to texture synthesis algorithm, where the pixels are selected over similar image patches to regenerate textures.  
One of the difficulties in applying such nonlocal filter for texture edge detection is that near the boundary of texture, there may not be similar patches to give the clear boundary, different from denoising non textures edges.  

We use local patch information utilizing similar responses within one texture, and different responses against different textures, and propose a Texture Edge detection method by Patch consensus (TEP). 
We propose a consensus based edge detection, which utilize the fact that away from the boundary often gives a clear idea about where the boundary of the texture should be located.  We analyze the statistical condition for the texture edge to be detected by patch-wise similarity, and explore the relation between the patch width and the performance of the proposed method.  
The contributions of this paper are as follows:
\begin{itemize}
\item{We propose a simple training-free filter-free Texture Edge detection method using patch consensus (TEP). }
\item{We statistically analyze when the texture can be separated by the patch consensus. }
\item{Numerical results are presented to validate the proposed model. }
\end{itemize}

The paper is outlined as follows: In Section \ref{sec:method}, the details of texture edge detection with Patch consensus (TEP) is presented.  Statistical analysis of the proposed model is provided in Section \ref{sec:analysis}.  In Section \ref{sec:Numerical}, we present the algorithms and numerical implementation details, and  in Section \ref{sec:experiments} various experiments with comparisons and applications are presented.

\section{The proposed model: Texture Edge detection by  Patch consensus (TEP)}\label{sec:method}

Let the discrete image domain be $\Omega =  [1,2,\dots, M]\bigoplus[1,2,\dots,N]$, and let the matrix $U \in \mathbb{R}^{M \times N} $ denote the given image, where $U[\mathbf{x}]= U[x_1, x_2]$ represents the intensity value at a pixel location $\mathbf{x} \in \Omega$. We consider a square neighborhood of $\mathbf{x}$ with the width  2$r+1 \in \mathbb{Z}^+$ to be
\begin{equation*}
\mathcal{B}_r(\mathbf{x}) = \{\mathbf{y}\in\Omega \mid \|\mathbf{y}-\mathbf{x}\|_\infty \leq r\}, 
\end{equation*}
and we denote the vector version of the image patch of $\mathcal{B}_r(\mathbf{x})$ to be $\vec{\mathcal{P}}(\mathbf{x}) \in \mathbb{R}^{d}$ that $d = (2r+1)^2$. We refer to $r$ as the patch width parameter.   
We set the order of the entry in the vector to be column-wise from left to right, 
i.e., let the matrix $C$ be the $\sqrt{d}\times\sqrt{d}$ image, the transformation matrix pair $A\in\mathbb{R}^{{d}\times \sqrt{d}}, B\in\mathbb{R}^{\sqrt{d}\times 1}$, where
\begin{align*}
    A = \left(\begin{array}{c}
         1  \\
         1  \\
         \vdots \\
         1
    \end{array}\right)\bigotimes I, \qquad
    B = \left(\begin{array}{c}
         1  \\
         0  \\
         \vdots \\
         0 
    \end{array}\right), \qquad \text{then} \quad
    \vec{\mathcal{P}}(\mathbf{x}) = ACB
\end{align*}
which transforms a square patch in $\mathbb{R}^{\sqrt{d}\times \sqrt{d}}$ to a vector in $\mathbb{R}^{d}$.
Here $\bigotimes$ denotes Konecker product, and $I$ is $\sqrt{d}$ dimensional identity matrix. 

The main idea of the proposed method, Texture Edge detection by Path consensus (TEP), is as follows.  From a \textit{local patch} $\vec{\mathcal{P}}(\mathbf{x})$, a \textit{patch response} $\mathcal{R}(\mathbf{y};\mathbf{x})$ is considered.  We segment these patch responses $\mathcal{R}(\mathbf{y};\mathbf{x})$ to emphasize the similarities and the differences of patch responses.  Then, we collect these segmentation boundaries and construct the edge function $V$ in  $\Omega$. 

\textbf{[Step 1]}
For each patch $\vec{\mathcal{P}}(\mathbf{x})$, we define the patch response in a larger domain as 
\begin{equation} \label{e:response}
\mathcal{R}(\mathbf{y};\mathbf{x}) = \frac{1}{(2r+1)^2}\|\vec{\mathcal{P}}(\mathbf{y}) - \vec{\mathcal{P}}(\mathbf{x})\|_2^2 \geq 0.
\end{equation}
Here $\mathbf{y} \in \mathcal{B}_R(\mathbf{x})$ with  $R > r$ represents the half width of the comparison neighborhood.  We refer to $R$ as the large comparison region width parameter.  
This $\mathcal{R}(\mathbf{y};\mathbf{x})$ measures the similarity of a patch at  $\mathbf{x}$ and a patch at $\mathbf{y}$.  When the  patches $\vec{\mathcal{P}}(\mathbf{x})$ and $\vec{\mathcal{P}}(\mathbf{y})$ are similar, it gives near zero value, and when they are very different, it gives a high value.  For computational efficiency, we take $\mathbf{y}$ from the neighborhood $\mathcal{B}_R(\mathbf{x})$, but one may use $\mathbf{y} \in \Omega$.  

\textbf{[Step 2]} To emphasize the texture differences and capture the edge information more clearly, we segment the response $\mathcal{R}(\mathbf{y};\mathbf{x})$ on $\mathcal{B}_R(\mathbf{x})$ using the following unsupervised multiphase segmentation model \cite{unsupervised}:
\begin{equation}\label{eq:multiphase}
    E_{\text{seg}}(\chi_i, c_i, K| \mathcal{R}) 
    = \lambda  \left(\sum_{i=1}^K \frac{P_i}{A_i} \right)  \mathcal{H}^1(\Gamma) 
    + \sum_{i=1}^K\int_{\chi_i} | \mathcal{R}(\mathbf{y};\mathbf{x}) - c_i|^2 d\mathbf{x}
\end{equation}
where $\chi_i$ is the indicator function of each phase $i$ which partitions $B_R(\mathbf{x}) = \bigcup_{i=1}^{K} \chi_i$,  $K$ is the number of phases, $\mathcal{H}^1$ denotes one-dimensional Hausdorff measure, $\Gamma = \cup_{i=1}^K \{\partial \chi_i\}$ is the set of all boundaries, and $c_i = \int_{\chi_i} \mathcal{R}(\mathbf{y};\mathbf{x}) \; d\mathbf{x} /\int_{\chi_i}d\mathbf{x}$ is the intensity average of the phase $i$.  Here the scale term $P_i/A_i = \mathcal{H}^1(\partial \chi_i) /\int_{\chi_i}  d\mathbf{x} $ is the perimeter over the area of each phase $i$.  
This model (\ref{eq:multiphase}) automatically finds the number of phases $K$ by a greedy algorithm.   In this paper, we bound the number of phases to be $K\in\{1, 2\}$, thus it finds either one or two phases within the response $\mathcal{R}(\mathbf{y};\mathbf{x})$. 
We define the local edge function  to be  
\begin{equation} \label{eq:w}
    \mathcal{W}(\mathbf{y};\mathbf{x}) = \frac{1}{2} \sum_{i=1}^K |\nabla \chi_i|.
\end{equation}
This represents the edge from the point of view of patch $\vec{\mathcal{P}}(\mathbf{x})$. 

\textbf{[Step 3]} A local response for points on the boundary of texture doesn't give a good edge information in general, thus we use consensus and collect the segmented patches to determine the edge function $V(\mathbf{x})$, by superposing $\mathcal{W}(\mathbf{y};\mathbf{x})$ for $\forall \mathbf{x}\in\Omega$;
\begin{align} \label{eq:V}
	V(\mathbf{x}) = \frac{1}{\abs{\mathcal{B}_R(\mathbf{x})}} \sum_{\mathbf{y}\in \mathcal{B}_R(\mathbf{x})} \mathcal{W}(\mathbf{x};\mathbf{y}).
\end{align}
This becomes a non-binary edge function $V(\mathbf{x}):\Omega\to[0, 1]$ representing the ratio of $\mathbf{x}$'s neighbors $\mathbf{y}\in \mathcal{B}_R(\mathbf{x})$ that voted $\mathbf{x}$ as an edge pixel.  This superposition gives consensus among patch responses.  Even when the texture boundary is not very clear, points away from the boundary can still give a good information about the edge location. 
\begin{figure}
	\begin{center}
	\includegraphics[width =\textwidth]{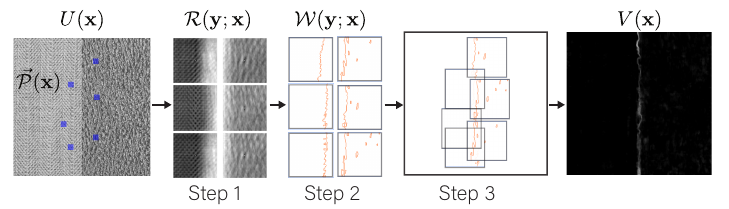}
	\end{center}
	\caption{Texture Edge detection by Patch consensus (TEP). For each patch $\vec{\mathcal{P}}(\mathbf{x})$ in the given image $U$, the patch response $\mathcal{R}(\mathbf{y};\mathbf{x})$ is computed.  The patch response is segmented, and the boundary of the phases gives the local edge $\mathcal{W}(\mathbf{y},\mathbf{x})$.  The edge function $V(\mathbf{x})$ is computed by the consensus of the local edge function $\mathcal{W}(\mathbf{y},\mathbf{x})$.}
	\label{fig:flow}
\end{figure}

The flowchart of the proposed model TEP is presented in Figure \ref{fig:flow}: for each pixel $\mathbf{x} \in \Omega$ and its patch $\vec{\mathcal{P}}(\mathbf{x})$, the patch responses on a larger domain $B_R (\mathbf{x})$ is computed as $\mathcal{R}(\mathbf{y};\mathbf{x})$.  We use  unsupervised segmentation to segment the patch responses to emphasize the similarities and the differences among these patch responses. 
The gradient of phases is used to compute the local edges $\mathcal{W}(\mathbf{y};\mathbf{x})$.  Finally, the consensus is used to get the edge map $V(\mathbf{x})$. 
Since we use the observer patch $\vec{\mathcal{P}}(\mathbf{x})$ as input, the proposed method is self-adaptive to the image without the need of training.  This also reduces the number of hyper-parameters needed in filter based approaches.  The parameters needed for TEP are the patch width parameter $r$ of $\vec{\mathcal{P}}(\mathbf{x})$, the large comparison region width parameter $R$, and one regularity parameter $\lambda$ for the unsupervised multi-phase segmentation.

\section{Analytical properties of the proposed model}  \label{sec:analysis}

In this section, we statistically analyze when the texture can be separated by the patch consensus. In particular, we model the texture as random fields, derive the necessary conditions for our model to generate distinguishable patch responses for different textures in the sense of patch-wise Euclidean distance, and study the roles of the patch width parameter $r$, and the large comparison region width parameter $R$.

\subsection{Texture as Stationary Random Field} 

Random field models the self-similarity property of the natural stochastic textures well that statistical approaches are proposed for structure-texture decomposition and image denoising \cite{khawaled2019self, zachevsky2016statistics, xu2020structure}. 
In this paper, we model texture as a two dimensional Gaussian random field \cite{adler2009random} defined on pixels $\Omega$, and study how the decay of correlation of the texture random field helps to identify texture boundaries from the patch responses for stochastic textures. In the context of discussing image patches as random vectors, we use calligraphic letter $\vec{\mathcal{P}}$ to denote random vector and lowercase letter $\vec{v}$ to denote a concrete vector of the same size as $\vec{\mathcal{P}}$. We start with introducing the definitions of Gaussian random field and its related properties. 

\begin{definition}\label{def:Gaussian random field}
    Let $\mathbf{x}\in\Omega\subset\mathbb{Z}^2$ be the pixel index. The set of random variables $\mathcal{P} = \{\mathcal{P}(\mathbf{x})\}_{\mathbf{x}\in\Omega}$ is a \textbf{Gaussian random field}, if $\vec{\mathcal{P}}(\mathbf{x}) = [\mathcal{P}(\mathbf{x}_1), \mathcal{P}(\mathbf{x}_2), \dots, \mathcal{P}(\mathbf{x}_d)]^T$ is a $d$-dimensional Gaussian random vector for arbitrary choices of indices $\mathbf{x}_1, \mathbf{x}_2, \dots, \mathbf{x}_d \in\Omega$, where $d\in\mathbb{Z}^+$ and $\mathcal{P}(\mathbf{x}_i)$ denotes a Gaussian variable indexed by pixel location $\mathbf{x}_i$.    The probability density of $\vec{\mathcal{P}}(\mathbf{x}) = \vec{v}$ is given by
    \begin{align}
        \phi(\vec{v}) = \frac{1}{(2\pi)^{d/2}|\Sigma|^{1/2}}e^{-\frac{1}{2}(\vec{v}-\vec{\mu}_p)^T\Sigma_p^{-1}(\vec{v}-\vec{\mu}_p)},\notag
    \end{align}
    where $\vec{\mu}_p = \mathbb{E}(\vec{\mathcal{P}}(\mathbf{x}))$ is the expectation vector and $\Sigma_p = \Cov(\vec{\mathcal{P}}(\mathbf{x}))$ is the nonnegative definite $d\times d$ covariance matrix.
\end{definition}

A Gaussian random field is completely determined by its first and the second moments, i.e., its mean $\vec{\mu}$ and covariance $\Sigma$, and  Gaussian distribution is suitable for many natural stochastic textures \cite{zachevsky2016statistics}. 
In this paper, we assume fast decaying of the correlation with respect to pixelwise distance $\|\mathbf{x}-\mathbf{y}\|_2$ and choose a squared exponential covariance function such as 
\begin{align}\label{eq:exp square covariance}
    \Cov(\mathcal{P}(\mathbf{x}_1), \mathcal{P}(\mathbf{x}_2)) = \gamma_p(\mathbf{x}_1, \mathbf{x}_2) = \sigma_p^2\exp\left(-\frac{\|\mathbf{x}_1 - \mathbf{x}_2\|_2^2}{2l_p^2}\right),
\end{align}
which makes the random field $\mathcal{P}$ stationary and isotropic, here $\sigma_p > 0$ is the magnitude parameter and $l_p>0$ is the decaying rate parameter. We remark that we choose the squared exponential decaying covariance \eqref{eq:exp square covariance} for the convenience of computation, and the derivations of this section can be generalized to decaying covariance functions of any order.  
For textures with spatially repetitive patterns, it is natural to assume the corresponding random field to be stationary \cite{zachevsky2016statistics}. 
\begin{definition}\label{def:stationary}
    A Gaussian random field $\mathcal{P}$ is called \textbf{stationary}, if for every $\mathbf{x}_1, \mathbf{x}_2, \dots, \mathbf{x}_d\in\Omega$ and $\mathbf{z}\in\mathbb{Z}^2$, the joint distribution of the Gaussian random vector $[\mathcal{P}(\mathbf{x}_1 + \mathbf{z}), \mathcal{P}(\mathbf{x}_2 + \mathbf{z}), \dots, \mathcal{P}(\mathbf{x}_d + \mathbf{z})]$ is independent of $\mathbf{z}$.
\end{definition}

\begin{definition}\label{def:isotropic}
    A stationary Gaussian random field $\mathcal{P}$ is called \textbf{isotropic}, if its covariance function $\gamma_p(\mathbf{x}, \mathbf{y})$ only depends on the relative distance of pixels $\mathbf{x}$ and $\mathbf{y}$, i.e., $\gamma_p(\mathbf{x} - \mathbf{y}) = \gamma_p(\|{\mathbf{x}}-\mathbf{y}\|_2)$.
\end{definition}

An immediate consequence of Definition \ref{def:stationary} is that the distribution of the $d$-dimensional image patch is independent of the choice of the patch center, i.e. $\vec{\mathcal{P}}(\mathbf{x})\sim\mathcal{N}\left(\vec{\mu}_p, \Sigma_p\right)$ for all $\mathbf{x}\in\Omega$. The patch response $\mathcal{R}(\mathbf{y},\mathbf{x})$ involves the observation of two patches $\vec{\mathcal{P}}(\mathbf{x})$ and $\vec{\mathcal{P}}(\mathbf{y})$. The mutual distribution of the two patches follows
\begin{align}
    \begin{pmatrix}
        \vec{\mathcal{P}}(\mathbf{x})\\
        \vec{\mathcal{P}}(\mathbf{y}) 
    \end{pmatrix}\sim \mathcal{N}\left(
    \begin{pmatrix}
        \vec{\mu}_p\\
        \vec{\mu}_p
    \end{pmatrix},
    \begin{pmatrix}
        \Sigma_p & \Sigma_{\mathrm{c}}(\tau)\\
        \Sigma_{\mathrm{c}}^T(\tau) & \Sigma_p
    \end{pmatrix}
    \right)\notag
\end{align}
where the $d\times d$ covariance matrix $\Sigma_{\mathrm{c}}(\tau) = \Cov(\vec{\mathcal{P}}(\mathbf{x}), \vec{\mathcal{P}}(\mathbf{y}))$ only depends on the relative distance $\tau = \|\mathbf{y} - \mathbf{x}\|_2$ of pixels $\mathbf{x}, \mathbf{y}$, as a consequence of Definition \ref{def:isotropic}. The entries of the covariance function is given by \eqref{eq:exp square covariance}, i.e., 
\[
    \Sigma_{\mathrm{c}}(\tau)[i,j] = \sigma_p^2\exp\left(-\frac{\tau_{i,j}^2}{2l_p^2}\right)
\]
where $\tau_{i,j}$ denotes the relative distance of $i$'th pixel in $\vec{\mathcal{P}}(\mathbf{x})$ and $j$'th pixel in $\vec{\mathcal{P}}(\mathbf{y})$. Let $d = (2r+1)^2$ as in section \ref{sec:method}, and we assume $\tau > 2\sqrt{2}r$, which guarantees that $\vec{\mathcal{P}}(\mathbf{x})$ and $\vec{\mathcal{P}}(\mathbf{y})$ do not overlap, hence $\tau_{i,j} >\tau - 2\sqrt{2}r$ for all $i,j\in [1, (2r+1)^2] \cap \mathbb{Z}$. This leads to an upper bound of the Frobenius norm of the covariance matrix $\Sigma_{\mathrm{c}}$:
\begin{align}
    \|\Sigma_{\mathrm{c}}(\tau)\|_F 
   \;\;  = \;\; \sqrt{\sum_{i,j=1}^{(2r+1)^2}\sigma_p^4 \exp{\left(-\frac{\tau_{i,j}^2}{l_p^2}\right)} }     \;\; \leq \;\; \sigma_p^2 (2r+1)^2\exp\left(-\frac{(\tau - 2\sqrt{2}r)^2}{2l_p^2}\right).\label{eq:frobenius bound}
\end{align}
Fixing $r$, the cross term $\Sigma_{\mathrm{c}}(\tau)\to O_d$ as $\tau\to\infty$, where $O_d$ is $\mathbb{R}^{d\times d}$ null matrix. Comparing \eqref{eq:exp square covariance} and \eqref{eq:frobenius bound}, the decaying rate of correlation of the image patches is consistent with the rate of the pixel-wise covariance function $\gamma(\tau)$.

The conditional distribution of $\vec{\mathcal{P}}(\mathbf{y})$ with respect to $\vec{\mathcal{P}}(\mathbf{x})$ is again multivariate Gaussian, which is fully determined by its mean and variance functions
\begin{align}
    \vec{\mu}_p(\mathbf{y};\mathbf{x}) &= \vec{\mu}_p + \Sigma_{\mathrm{c}}^T(\tau)\Sigma_p^{-1}(\vec{\mathcal{P}}(\mathbf{x})-\vec{\mu}_p)\label{eq:conditional expectation},\\
    \Sigma_p(\mathbf{y};\mathbf{x}) &= \Sigma_p - \Sigma_{\mathrm{c}}^T(\tau)\Sigma_p^{-1}\Sigma_{\mathrm{c}}(\tau).\label{eq:conditional variance}
\end{align}
Combining with \eqref{eq:frobenius bound}, 
$\vec{\mu}_p(\mathbf{y};\mathbf{x})$ and $\Sigma_p(\mathbf{y};\mathbf{x})$ converge to $\vec{\mu}_p$ and $\Sigma_p$ as $\tau\to\infty$, i.e,
\begin{align*}
    \lim_{\tau\to\infty} \|\vec\mu_p(\mathbf{y};\mathbf{x}) - \vec{\mu}_p\|_F = 0,\quad
    \lim_{\tau\to\infty} \|\Sigma_p(\mathbf{y};\mathbf{x}) - \Sigma_p\|_F = 0.
\end{align*}

\subsection{Characteristics of the patch response\label{sec:characteristics}}

In the following, we provide the main results of the section. In order to compute the expectation of $\mathcal{R}(\mathbf{y};\mathbf{x})$, we need the following lemma:
\begin{lemma}[Expectation of quadratic form \cite{seber2003linear}]\label{lemma: expectation of random vector}
Let $\vec{\mathcal{P}}$ be a $d\times1$ random vector with mean $\vec{\mu}_p$ and variance $\Sigma_p$, and let $A$ be an $d\times d$ symmetric matrix. Then
    \begin{align*}
        \mathbb{E}(\vec{\mathcal{P}}^TA\vec{\mathcal{P}}) = \vec{\mu}_p^TA\vec{\mu}_p + \mathrm{tr}(A\Sigma_p)
    \end{align*}
    where $\mathrm{tr}(\cdot)$ is the trace operator.
\end{lemma}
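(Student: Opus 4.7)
The plan is to prove this by a standard centering-and-trace-trick argument. First I would decompose $\vec{\mathcal{P}} = \vec{\mu}_p + \vec{\mathcal{Q}}$ where $\vec{\mathcal{Q}} = \vec{\mathcal{P}} - \vec{\mu}_p$ is the centered version, satisfying $\mathbb{E}(\vec{\mathcal{Q}}) = 0$ and $\mathbb{E}(\vec{\mathcal{Q}}\vec{\mathcal{Q}}^T) = \Sigma_p$. Substituting into the quadratic form and expanding gives
\begin{align*}
\vec{\mathcal{P}}^T A \vec{\mathcal{P}} = \vec{\mu}_p^T A \vec{\mu}_p + \vec{\mu}_p^T A \vec{\mathcal{Q}} + \vec{\mathcal{Q}}^T A \vec{\mu}_p + \vec{\mathcal{Q}}^T A \vec{\mathcal{Q}}.
\end{align*}

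Next I would take expectations term by term, using linearity. The first term is deterministic. The two cross terms each contain one factor of $\vec{\mathcal{Q}}$, so $\mathbb{E}(\vec{\mu}_p^T A \vec{\mathcal{Q}}) = \vec{\mu}_p^T A \, \mathbb{E}(\vec{\mathcal{Q}}) = 0$ and similarly for the other. (Here symmetry of $A$ is not yet used, but it will simplify the final form.) Thus the problem reduces to evaluating $\mathbb{E}(\vec{\mathcal{Q}}^T A \vec{\mathcal{Q}})$.

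For the remaining scalar $\vec{\mathcal{Q}}^T A \vec{\mathcal{Q}}$, I would apply the trace trick: since a scalar equals its own trace, and the trace is cyclic,
\begin{align*}
\vec{\mathcal{Q}}^T A \vec{\mathcal{Q}} = \mathrm{tr}(\vec{\mathcal{Q}}^T A \vec{\mathcal{Q}}) = \mathrm{tr}(A \vec{\mathcal{Q}} \vec{\mathcal{Q}}^T).
\end{align*}
Because trace is a finite linear combination of matrix entries, it commutes with expectation, so
\begin{align*}
\mathbb{E}(\vec{\mathcal{Q}}^T A \vec{\mathcal{Q}}) = \mathrm{tr}\bigl(A \, \mathbb{E}(\vec{\mathcal{Q}} \vec{\mathcal{Q}}^T)\bigr) = \mathrm{tr}(A \Sigma_p).
\end{align*}
Adding the nonzero pieces yields $\mathbb{E}(\vec{\mathcal{P}}^T A \vec{\mathcal{P}}) = \vec{\mu}_p^T A \vec{\mu}_p + \mathrm{tr}(A\Sigma_p)$, as claimed.

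There is essentially no obstacle here: the result holds for any random vector with finite second moments (Gaussianity is not needed), and symmetry of $A$ is only used cosmetically in combining the two mixed terms. The only care point is justifying the swap of trace and expectation, which is immediate since the trace is just the sum of the diagonal entries and each entry of $\vec{\mathcal{Q}}\vec{\mathcal{Q}}^T$ has finite expectation by assumption on $\Sigma_p$.
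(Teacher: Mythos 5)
Your proof is correct and complete: the centering decomposition, vanishing of the cross terms, and the trace--expectation interchange are all justified, and your observations that Gaussianity is unnecessary and that symmetry of $A$ plays no essential role are both accurate. The paper does not prove this lemma itself --- it is quoted with a citation to Seber and Lee --- and your argument is precisely the standard one given there, so there is nothing to reconcile.
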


\begin{theorem}\label{thm: expectation of R}
    Let the random field $\mathcal{P}$ be defined as in Definition \ref{def:Gaussian random field}, equipped with the covariance function \eqref{eq:exp square covariance}. Then the patch response $\mathcal{R}(\mathbf{y};\mathbf{x}) = \frac{1}{d}\|\vec{\mathcal{P}}(\mathbf{y}) - \vec{\mathcal{P}}(\mathbf{x})\|_2^2$, where $d = (2r+1)^2$, has expectation
    \begin{align}\label{eq:expectation of R}
        \mathbb{E}\left(\mathcal{R}(\mathbf{y};\mathbf{x})\right) = 2\sigma_p^2\left(1-\exp(-\frac{\tau^2}{2l_p^2})\right).
    \end{align}
\end{theorem}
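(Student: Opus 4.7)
The plan is to rewrite the response as a quadratic form in a single centered Gaussian vector and then apply Lemma \ref{lemma: expectation of random vector}. Let $\vec{Z} = \vec{\mathcal{P}}(\mathbf{y}) - \vec{\mathcal{P}}(\mathbf{x})$, so that
\[
\mathcal{R}(\mathbf{y};\mathbf{x}) \;=\; \frac{1}{d}\,\vec{Z}^{T} I_{d}\, \vec{Z},
\]
where $I_{d}$ is the $d\times d$ identity. Since $\mathcal{P}$ is a stationary Gaussian random field, both patches have the same mean vector $\vec{\mu}_{p}$, so $\mathbb{E}(\vec{Z}) = \vec{0}$; by the joint distribution formula in the previous subsection, the covariance is
\[
\Cov(\vec{Z}) \;=\; \Sigma_{p} + \Sigma_{p} - \Sigma_{\mathrm{c}}(\tau) - \Sigma_{\mathrm{c}}^{T}(\tau) \;=\; 2\Sigma_{p} - \Sigma_{\mathrm{c}}(\tau) - \Sigma_{\mathrm{c}}^{T}(\tau).
\]
Applying the lemma with $A = I_{d}$ then gives $\mathbb{E}(\vec{Z}^{T}\vec{Z}) = \mathrm{tr}\bigl(2\Sigma_{p} - \Sigma_{\mathrm{c}}(\tau) - \Sigma_{\mathrm{c}}^{T}(\tau)\bigr)$, and dividing by $d$ will finish the calculation once the two traces are computed.

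For the trace of $\Sigma_{p}$, the stationarity assumption forces every diagonal entry to equal $\gamma_{p}(\mathbf{0}) = \sigma_{p}^{2}$, so $\mathrm{tr}(\Sigma_{p}) = d\sigma_{p}^{2}$. The one step that needs a small observation, and is the only place the specific geometry of the patches matters, is the trace of $\Sigma_{\mathrm{c}}(\tau)$: its $i$-th diagonal entry is $\Cov(\mathcal{P}(\mathbf{x}_{i}), \mathcal{P}(\mathbf{y}_{i}))$, where $\mathbf{x}_{i}$ and $\mathbf{y}_{i}$ are the $i$-th pixels of the two patches. Because $\vec{\mathcal{P}}(\mathbf{x})$ and $\vec{\mathcal{P}}(\mathbf{y})$ are translates of the same patch template, the offset $\mathbf{x}_{i} - \mathbf{y}_{i}$ is independent of $i$ and equals $\mathbf{x} - \mathbf{y}$; hence by isotropy $\tau_{i,i} = \|\mathbf{x}_{i}-\mathbf{y}_{i}\|_{2} = \tau$, and every diagonal entry equals $\sigma_{p}^{2}\exp(-\tau^{2}/(2l_{p}^{2}))$. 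The same is true for $\Sigma_{\mathrm{c}}^{T}(\tau)$, so both traces equal $d\sigma_{p}^{2}\exp(-\tau^{2}/(2l_{p}^{2}))$.

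Substituting these traces yields
\[
\mathbb{E}(\mathcal{R}(\mathbf{y};\mathbf{x})) \;=\; \frac{1}{d}\Bigl(2 d\sigma_{p}^{2} - 2 d\sigma_{p}^{2}\exp\bigl(-\tfrac{\tau^{2}}{2 l_{p}^{2}}\bigr)\Bigr) \;=\; 2\sigma_{p}^{2}\Bigl(1 - \exp\bigl(-\tfrac{\tau^{2}}{2 l_{p}^{2}}\bigr)\Bigr),
\]
which is the claimed identity. The proof is almost entirely bookkeeping; the only step that is not a pure application of the lemma is recognizing that the diagonal of the cross-covariance $\Sigma_{\mathrm{c}}(\tau)$ collapses to a single value because the $i$-th pixels of the two patches are always separated by the vector $\mathbf{y}-\mathbf{x}$. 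This is where the stationary-plus-isotropic assumption, together with the fact that the two patches have identical internal geometry, is essential, and I expect it to be the only place where a reader could get confused.
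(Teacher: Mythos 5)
Your proof is correct, and it takes a genuinely different route from the paper's. The paper computes $\mathbb{E}(\mathcal{R})$ as an iterated expectation $\mathbb{E}_{\mathbf{x}}(\mathbb{E}_{\mathbf{y}|\mathbf{x}}(\cdot))$: it first invokes the conditional Gaussian distribution of $\vec{\mathcal{P}}(\mathbf{y})$ given $\vec{\mathcal{P}}(\mathbf{x})$ (with mean \eqref{eq:conditional expectation} and covariance \eqref{eq:conditional variance}), applies Lemma \ref{lemma: expectation of random vector} to the inner expectation, and then averages over $\vec{\mathcal{P}}(\mathbf{x})$ using four auxiliary identities; along the way the terms $\mathrm{tr}(\Sigma_p^{-1}\Sigma_{\mathrm{c}}^T\Sigma_{\mathrm{c}})$ appear and cancel. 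You instead work directly with the difference vector $\vec{Z} = \vec{\mathcal{P}}(\mathbf{y}) - \vec{\mathcal{P}}(\mathbf{x})$, which is centered with covariance $2\Sigma_p - \Sigma_{\mathrm{c}}(\tau) - \Sigma_{\mathrm{c}}^T(\tau)$, and apply the quadratic-form lemma exactly once with $A = I_d$. This collapses the computation to two traces and sidesteps the conditional-distribution machinery entirely; it also makes visible that the identity needs only the first two moments of the field, not Gaussianity. Your handling of the one nontrivial step is right and is the same fact the paper uses implicitly in its final line: the diagonal of $\Sigma_{\mathrm{c}}(\tau)$ is constant because corresponding pixels of the two patches are always offset by $\mathbf{y}-\mathbf{x}$, so $\tau_{i,i}=\tau$ and $\mathrm{tr}(\Sigma_{\mathrm{c}}(\tau)) = d\sigma_p^2\exp(-\tau^2/(2l_p^2))$ (and likewise for the transpose, since trace is invariant under transposition even though $\Sigma_{\mathrm{c}}$ need not be symmetric). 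The paper's longer route does have one side benefit: the conditional mean and covariance it develops are reused in Theorem \ref{thm: var of R}, so the machinery is not wasted there, but for this theorem alone your argument is the more economical one.
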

The proof is presented in Appendix \ref{app:proof of T1}.  Theorem \ref{thm: expectation of R} describes the expectation of $\mathcal{R(\mathbf{y};\mathbf{x})}$ when 
the patch centered at location $\mathbf{y}$ is drawn from $\mathcal{P}$.  

When it is not, i.e. comparing two different textures, let $\vec{\mathcal{Q}}(\mathbf{y}) \sim \mathcal{N}\left(\vec{\mu}_q, \Sigma_q\right)$ be another random field independent from $\mathcal{P}$, where the covariance function is given as
\begin{align*}
    \Cov(\mathcal{Q}(\mathbf{x}_1), \mathcal{Q}(\mathbf{y}_2)) = \gamma_q(\mathbf{x}_1, \mathbf{x}_2) = \sigma_q^2\exp\left(-\frac{\|\mathbf{x}_1 - \mathbf{x}_2\|_2^2}{2l_q^2}\right)
\end{align*}
for some $\sigma_q, l_q >0$. If the patch $\vec{\mathcal{P}}(\mathbf{x})$ is observing $\vec{\mathcal{Q}}(\mathbf{y})$, we simply have
\begin{align*}
    \vec{\mu}_q(\mathbf{y};\mathbf{x}) = \vec{\mu}_q, \quad  \Sigma_q(\mathbf{y};\mathbf{x}) = \Sigma_q,
\end{align*}
since $\Cov(\mathcal{P}(\mathbf{x}), \mathcal{Q}(\mathbf{y})) = 0$.  The expectation of the patch response is given as
\begin{align}
   \mathbb{E}\left(\frac{1}{d}\|\vec{\mathcal{P}}(\mathbf{x}) - \vec{\mathcal{Q}}(\mathbf{y})\|_2^2\right) &= \frac{1}{d}\mathbb{E}_\mathbf{x}\left(\mathbb{E}_{\mathbf{y}|\mathbf{x}}\left(\vec{\mathcal{P}}(\mathbf{x})^T\vec{\mathcal{P}}(\mathbf{x}) - 2\vec{\mathcal{P}}(\mathbf{x})^T\vec{\mathcal{Q}}(\mathbf{y}) + \vec{\mathcal{Q}}(\mathbf{y})^T\vec{\mathcal{Q}}(\mathbf{y})\right)\right)\notag\\
   &= \frac{1}{d}\left(\vec{\mu}_p^T\vec{\mu}_p + \mathrm{tr}(\Sigma_p) - 2\vec{\mu}_p^T\vec{\mu}_q + \vec{\mu_q}^T\vec{\mu_q} + \mathrm{tr}(\Sigma_q)\right)\notag\\
   &= \frac{1}{d}\left(\|\vec{\mu}_p - \vec{\mu}_q\|_2^2 + \mathrm{tr}(\Sigma_p) + \mathrm{tr}(\Sigma_q)\right) 
   = (\mu_p - \mu_q)^2 + \sigma_p^2 + \sigma_q^2 \label{eq:expectation of R_forQ}.
\end{align}
Suppose there are two textures $\mathcal{P}, \mathcal{Q}$ in $\mathcal{B}_R(\mathbf{x})$ while the patch in $\mathcal{B}_r(\mathbf{x})$ is drawn from $\mathcal{P}$.  Texture edge can be detected if the quantities \eqref{eq:expectation of R} and \eqref{eq:expectation of R_forQ} differs, preferably significantly differs.  This difference is described by 
\begin{align}
    \mathrm{diff}(\tau) &= \abs{\mathbb{E}\left(\frac{1}{d}\|\vec{\mathcal{P}}(\mathbf{y}) - \vec{\mathcal{P}}(\mathbf{x})\|_2^2 - \frac{1}{d}\|\vec{\mathcal{P}}(\mathbf{x}) - \vec{\mathcal{Q}}(\mathbf{y})\|_2^2\right)}\notag\\
    &= \abs{(\mu_p - \mu_q)^2 + (\sigma_p^2 - \sigma_q^2) - 2\sigma_p^2\exp{(-\frac{\tau^2}{2l_p^2})}}. \label{eq:diffOfResponse}
\end{align}
Notice that this difference \eqref{eq:diffOfResponse} is a function of $\tau$.  For larger $\tau$, this separation is clearer, yet, more likely $\vec{\mathcal{P}}(\mathbf{x})$ may  encounter another texture $\vec{\mathcal{Q}}(\mathbf{y})$ in a real image when $\mathbf{y}$ is far away from $\mathbf{x}$.  It is rather important to search for edges in $\mathcal{R}(\cdot;\mathbf{x})$ in the region that $\tau$ is large.

\subsection{Stability of the patch response w.r.t. the patch width parameter $r$}

We explore how the value of the patch response $\mathcal{R}(\mathbf{y};\mathbf{x})$ concentrates to its expectation with respect to the patch width parameter $r$.
Since $\mathcal{R}(\mathbf{y};\mathbf{x})$ is a quadratic form of two Gaussian vectors, its distribution can be described by a variation of the $\chi^2$ distribution \cite{Mathai1992quadratic}. 

\begin{theorem} \label{thm: var of R}
    Let $\mathcal{P}$ be defined as in Theorem \ref{thm: expectation of R}, then the patch response $\mathcal{R}(\mathbf{y};\mathbf{x})|_{\vec{\mathcal{P}}(\mathbf{x}) = \vec{v}}$ follows generalized $\chi^2$ distribution \cite{Mathai1992quadratic} in $\vec{\mathcal{P}}(\mathbf{y})$.  Assuming $1/d\|\vec{v} - \vec{\mu}_p\|_2^2 \sim \mathcal{O}(\sigma_p^2)$, the variance of the patch response becomes 
\begin{align*}
    \mathrm{Var}\left(\mathcal{R}(\mathbf{y};\mathbf{x})|_{\vec{\mathcal{P}}(\mathbf{x}) = \vec{v}}\right) \sim \mathcal{O}(\frac{\sigma_p^4}{r^2}).
\end{align*}    

\end{theorem}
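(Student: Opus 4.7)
The plan is to exploit the Gaussian conditional structure coming from \eqref{eq:conditional expectation}--\eqref{eq:conditional variance}: conditioned on $\vec{\mathcal{P}}(\mathbf{x}) = \vec{v}$, the response $\mathcal{R}(\mathbf{y};\mathbf{x})$ is a scaled quadratic form in a Gaussian vector, so the generalized $\chi^2$ claim is essentially by definition and the variance follows from the classical quadratic-form identity. The real work is showing that the resulting expression decays like $\sigma_p^4/r^2$, and this is where the squared-exponential decay \eqref{eq:exp square covariance} enters.

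Concretely, set $\vec{Y} := \vec{\mathcal{P}}(\mathbf{y}) - \vec{v}$, $\vec{m} := \vec{\mu}_p(\mathbf{y};\mathbf{x})-\vec{v}$, and $\Sigma := \Sigma_p(\mathbf{y};\mathbf{x})$, so that $\vec{Y} \sim \mathcal{N}(\vec{m},\Sigma)$ and $\mathcal{R}(\mathbf{y};\mathbf{x})|_{\vec{\mathcal{P}}(\mathbf{x})=\vec{v}} = \tfrac{1}{d}\vec{Y}^T\vec{Y}$. Diagonalizing $\Sigma = Q\Lambda Q^T$ and substituting $\vec{Y} = \vec{m} + Q\Lambda^{1/2}\vec{Z}$ with $\vec{Z}\sim\mathcal{N}(0,I)$ exhibits $\vec{Y}^T\vec{Y} = \sum_{i=1}^d \lambda_i\,\chi_1^2(\delta_i)$ with non-centralities $\delta_i = (Q^T\vec{m})_i^2/\lambda_i$, which is the definition of a generalized $\chi^2$ law. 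Then the standard variance identity $\Var(\vec{Y}^T A \vec{Y}) = 2\,\mathrm{tr}((A\Sigma)^2) + 4\vec{m}^T A\Sigma A\vec{m}$ for symmetric $A$ (a corollary of the independence of the $\chi_1^2(\delta_i)$'s), taken with $A = I/d$, yields
\[
\Var\bigl(\mathcal{R}(\mathbf{y};\mathbf{x})|_{\vec{\mathcal{P}}(\mathbf{x})=\vec{v}}\bigr) \; = \; \frac{2}{d^2}\,\mathrm{tr}(\Sigma^2) \; + \; \frac{4}{d^2}\,\vec{m}^T\Sigma\vec{m}.
\]

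I would then bound both terms by exploiting the covariance decay. For the trace term, since $\Sigma_{\mathrm{c}}^T\Sigma_p^{-1}\Sigma_{\mathrm{c}}$ is positive semidefinite, $\mathrm{tr}(\Sigma^2) \leq \mathrm{tr}(\Sigma_p^2) = \|\Sigma_p\|_F^2 = \sigma_p^4 \sum_{i,j}\exp(-\|x_i - x_j\|_2^2/l_p^2)$. The key estimate, proved by comparison with the lattice sum $\sum_{k\in\mathbb{Z}^2}e^{-\|k\|_2^2/l_p^2}$ (equivalently an integral comparison), is that for each fixed pixel $x_i$ the inner sum in $j$ is uniformly bounded by a constant $C(l_p)$, so $\|\Sigma_p\|_F^2 = \mathcal{O}(d\sigma_p^4)$ and the first term is $\mathcal{O}(\sigma_p^4/d) = \mathcal{O}(\sigma_p^4/r^2)$. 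For the cross term, the same row-sum bound combined with Gershgorin gives $\|\Sigma\|_{\mathrm{op}}\leq\|\Sigma_p\|_{\mathrm{op}} = \mathcal{O}(\sigma_p^2)$. Writing $\vec{m} = -(I - \Sigma_{\mathrm{c}}^T\Sigma_p^{-1})(\vec{v}-\vec{\mu}_p)$ and using the exponential smallness of $\|\Sigma_{\mathrm{c}}\|$ from \eqref{eq:frobenius bound} together with the hypothesis $\|\vec{v}-\vec{\mu}_p\|_2^2 = \mathcal{O}(d\sigma_p^2)$, one obtains $\|\vec{m}\|_2^2 = \mathcal{O}(d\sigma_p^2)$, and hence the cross term is also $\mathcal{O}(\sigma_p^4/r^2)$.

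The main obstacle is the uniform-in-$r$ control of $\|\Sigma_p\|_{\mathrm{op}}$ and of the row $\ell^1$-mass of $\Sigma_p$: the asserted rate $\sigma_p^4/r^2$ fails unless row sums stay bounded as $r$ grows, and this is guaranteed specifically by the squared-exponential tail in \eqref{eq:exp square covariance}; the argument would collapse for covariances with only algebraic decay. A secondary subtlety is controlling $\|I - \Sigma_{\mathrm{c}}^T\Sigma_p^{-1}\|_{\mathrm{op}}$, since $\Sigma_p^{-1}$ need not have bounded operator norm; this has to be handled via \eqref{eq:frobenius bound} and the non-overlap regime $\tau > 2\sqrt{2}r$ in which $\Sigma_{\mathrm{c}}$ is exponentially small. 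Boundary pixels of the patch, whose neighborhoods are truncated, are only easier and are handled by majorizing with the full $\mathbb{Z}^2$-lattice sum.
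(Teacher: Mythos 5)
Your proposal is correct and follows essentially the same route as the paper's proof: condition on $\vec{\mathcal{P}}(\mathbf{x})=\vec{v}$, diagonalize the conditional covariance to exhibit the response as a weighted sum of independent noncentral $\chi^2_1$ variables, and apply the standard quadratic-form variance identity $2\,\mathrm{tr}(\Sigma^2)/d^2 + 4\vec{m}^T\Sigma\vec{m}/d^2$. The only difference is that you explicitly justify the final asymptotic $\mathcal{O}(\sigma_p^4/d)$ --- via the lattice-sum/Gershgorin bounds on $\mathrm{tr}(\Sigma_p^2)$ and $\|\Sigma_p\|_{\mathrm{op}}$ and the control of the conditional mean through $\|I-\Sigma_{\mathrm{c}}^T\Sigma_p^{-1}\|_{\mathrm{op}}$ --- whereas the paper simply asserts this order at the last line; your added detail is a genuine (and welcome) tightening of the same argument rather than a different approach.
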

\begin{proof}
Fix $\vec{\mathcal{P}}(\mathbf{x}) = \vec{v}$, and denote $\vec{\mathcal{S}} \vcentcolon= \frac{1}{\sqrt{d}}\left(\vec{\mathcal{P}}(\mathbf{y})|_{\vec{\mathcal{P}}(\mathbf{x}) = \vec{v} } - \vec{v}\right)$, then $\vec{\mathcal{S}}$ follows the Gaussian distribution $\vec{\mathcal{S}} \sim \mathcal{N}\left(\vec{\mu}_*, \Sigma_*\right)$, where according to \eqref{eq:conditional variance}, we have
\begin{align*}
    \vec{\mu}_* = \frac{1}{\sqrt{d}}\left(\mathbb{E}\left(\vec{\mathcal{P}}(\mathbf{y})|_{\vec{\mathcal{P}}(\mathbf{x}) = \vec{v} }\right) - \vec{v}\right), \mathrm{~~and~~} \Sigma_* = \frac{1}{d}\left(\Sigma_p - \Sigma_{\mathrm{c}}^T(\tau)\Sigma_p^{-1}\Sigma_{\mathrm{c}}(\tau)\right).
\end{align*}
Let $Q$ be an orthogonal matrix that diagonalize $\Sigma_*$, that is, $Q^T\Sigma_* Q = \text{diag}(\lambda_1, \lambda_2, \dots, \lambda_d) = \Lambda$, where $\lambda_i > 0$ are the eigenvalues of $\Sigma_*$. Define a new random vector 
\[
\vec{\mathcal{U}} = Q^T\Sigma^{-\frac{1}{2}}_*(\vec{\mathcal{S}} - \vec{\mu}_*),
\]
here $\vec{\mathcal{U}}$ is standard Gaussian, i.e., $\vec{\mathcal{U}} \sim \mathcal{N}(\vec{0}, I_{d})$. The observed patch response can be reformulated as
\begin{align}
    \mathcal{R}(\mathbf{y};\mathbf{x})|_{\vec{\mathcal{P}}(\mathbf{x}) = \vec{v}} &= \|\vec{\mathcal{S}}\|_2^2 
     = (\vec{\mathcal{U}} + \vec{b})^T Q^T \Sigma_* Q(\vec{\mathcal{U}}+\vec{b}) = (\vec{\mathcal{U}} + \vec{b})^T \Lambda (\vec{\mathcal{U}}+\vec{b}) 
    = \sum_{j=1}^{d} \lambda_j (\mathcal{U}_j + b_j)^2\label{eq:response distribution},
\end{align}
where $\vec{b} = Q^T\Sigma^{-\frac{1}{2}}_*\vec{\mu}_*$, and $\mathcal{U}_j$ and $b_j$ denote the $j$'th element of vectors $\vec{\mathcal{U}}$, and $\vec{b}$, respectively.   The response \eqref{eq:response distribution} is a weighted sum of squares of $d$ independent Gaussian variables $(\mathcal{U}_j + b_j)\sim\mathcal{N}(b_j,1)$. Each $(\mathcal{U}_j + b_j)^2$ follows noncentral chi-squared distribution $\chi_\nu^2(\delta)$, which is fully described by the degree of freedom $\nu$ and noncentrality parameter $\delta$, and the mean and variance of such distribution is given by $\nu + \delta$ and $2(\nu + 2\delta)$. Specifically, we have $(\mathcal{U}_j + b_j)^2\sim\chi^2_1(b^2_j)$.
The density of the patch response \eqref{eq:response distribution} in general does not  have a closed form \cite{davies1980algorithm}. With $d = (2r+1)^2$, its variance becomes 
\begin{align*}
    \mathrm{Var}\left(\mathcal{R}(\mathbf{y};\mathbf{x})|_{\vec{\mathcal{P}}(\mathbf{x}) = \vec{v}}\right) &= \sum_{j=1}^{d}\lambda_j^2 \mathrm{Var}(\mathcal{U}_j + b_j)^2 
    = 2\sum_{j=1}^{d}\lambda_j^2(1 + 2b_j^2)\notag\\
    &= 2\mathrm{tr}(\Lambda^2) + 4\vec{b}^T\Lambda^2 \vec{b} = 2\mathrm{tr}(\Sigma_*^2) + 4\vec{\mu}_*^T\Sigma_*\vec{\mu}_*\notag\\
    &= \frac{2}{d^2}\left(\mathrm{tr}\left(\Sigma_p(\mathbf{y};\mathbf{x})^2\right) + 2(\vec{\mu}_p - \vec{v})^T \Sigma_p(\mathbf{y};\mathbf{x}) (\vec{\mu}_p - \vec{v})\right)\sim \mathcal{O}(\frac{\sigma_p^4}{d}).
\end{align*}
\end{proof}

In Figure \ref{fig:analysis_experiment}, (a) shows a synthetic image consists of two textures $\mathcal{P}$ (left) and $\mathcal{Q}$ (right) from Brodatz texture images set\footnote{The Brodatz texture image set is obtained from \url{https://sipi.usc.edu/database/}}. Two patches $\vec{\mathcal{P}}(\mathbf{x})$ and $\vec{\mathcal{Q}}(\mathbf{y})$ are  marked with blue and yellow squares correspondingly. (b) and (c) show two patch responses $\mathcal{R}(\cdot; \mathbf{x})$ and $\mathcal{R}(\cdot; \mathbf{y})$. The brightness is proportional to the value of the patch responses. 
This shows that with a suitable patch width parameter $r$, the texture edge is clearly emphasized, which is consistent with the analysis in section \ref{sec:characteristics}. 
The contrast of two textured regions clearly indicates the edge location.

\begin{figure}
    \centering
    \begin{tabular}{ccc}
        (a) & (b) & (c)\\
        \includegraphics[width=0.23\textwidth, angle=180,origin=c]{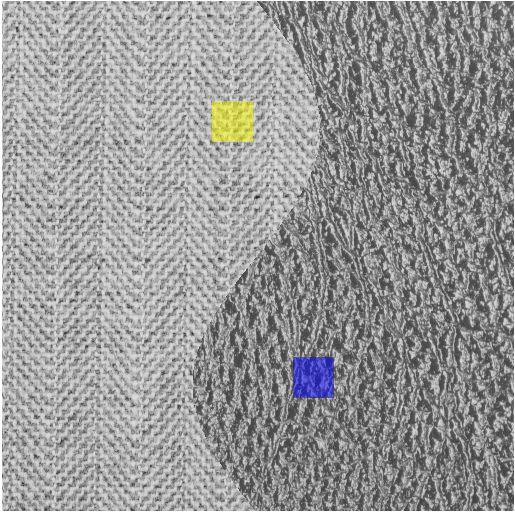} &
        \includegraphics[width=0.23\textwidth, angle=180,origin=c]
        {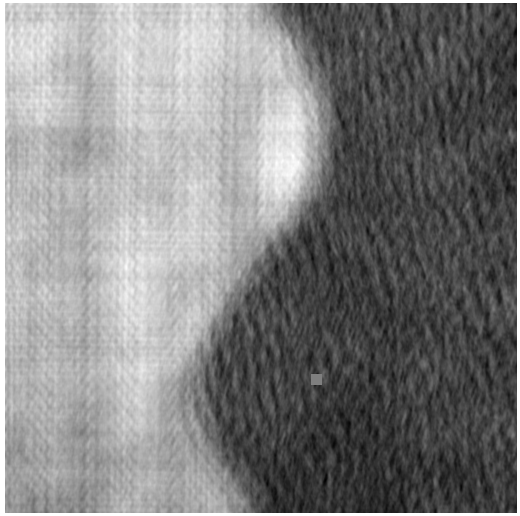} &
        \includegraphics[width=0.23\textwidth, angle=180,origin=c]
        {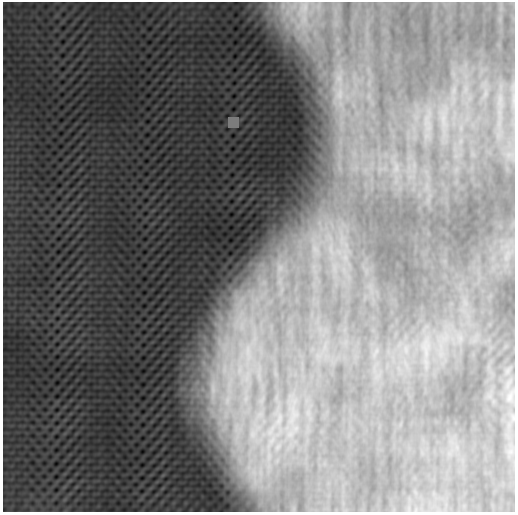}  
    \end{tabular}
    \caption{(a) Synthetic texture image with two textures $\mathcal{P}$ (left) and $\mathcal{Q}$ (right). Two patches $\vec{\mathcal{P}}(\mathbf{x})$ and $\vec{\mathcal{Q}}(\mathbf{y})$ are  marked with blue and yellow. (b) and (c) show two patch responses $\mathcal{R}(\cdot; \mathbf{x})$ and $\mathcal{R}(\cdot; \mathbf{y})$ respectively. Note that the texture edge is clearly emphasized with a suitable patch width parameter $r$.} \label{fig:analysis_experiment}  
\end{figure}

The edge function $V$ is given by the consensus of many patch responses.  For accurate edge detection, these responses from different observer patches should be consistent, that is many patch responses should recognize there is an edge.   This can be measured by the distribution of $\mathbb{E}_{\mathbf{y}|\mathbf{x}}\left(\mathcal{R}(\mathbf{y}; \mathbf{x})\right)$, the expected response from the perspective of patch $\vec{\mathcal{P}}(\mathbf{x})$. 
In Figure \ref{fig:response_pdf} (a), the histograms of the two textures in Figure \ref{fig:analysis_experiment} (a) are presented.  The intensity values of the two textures are heavily overlapped which indicates the challenges of using intensity based method to detect the texture boundaries.  By using the patch based consensus, Figure \ref{fig:response_pdf} (b) and (c) show that as the patch width parameter increases,  the more concentrated the expectation becomes. This is consistent with Theorem \ref{thm: var of R}, thus helping to distinguish two textures. 
Figure \ref{fig:response_pdf} (b) and (c) show the estimated distribution of $\mathbb{E}_{\mathbf{y}|\mathbf{x}}\left(\mathcal{R}(\mathbf{y}; \mathbf{x})\right)$, (b) is assuming the pixel $\mathbf{x}$ is from a random field $\mathcal{P}$, and (c) is assuming the pixel $\mathbf{x}$ is from a random field $\mathcal{Q}$.
Note that $\mathbb{E}_\mathbf{x}\left(\mathbb{E}_{\mathbf{y}|\mathbf{x}}\left(\mathcal{R}(\mathbf{y}; \mathbf{x})\right)\right)$ is given from  \eqref{eq:expectation of R} in Theorem \ref{thm: expectation of R}, if $\mathbf{y}$ is equipped with $\mathcal{P}$, and from \eqref{eq:expectation of R_forQ}, if equipped with $\mathcal{Q}$. Two sets of the distributions are concentrated around estimated expectations, which are computed from the pixel-wise means and variances of the texture images. 
We observe the concentration effect with a larger concentration rate, which is due to the variance difference of two textures.  In particular, we have $\sigma_p > \sigma_q$, and according to Theorem \ref{thm: var of R}, the variance of $\mathbb{E}_{\mathbf{y}|\mathbf{x}}\left(\mathcal{R}(\mathbf{y}; \mathbf{x})\right)$ is $\mathcal{O}(\sigma_p^4/r^2)$ for   Figure \ref{fig:response_pdf} (b) and $\mathcal{O}(\sigma_q^4/r^2)$ for   Figure \ref{fig:response_pdf} (c). Neither of the textures $\mathcal{P}$ and $\mathcal{Q}$ are strictly stationary nor isotropic, yet our model well-describes the behavior of the patch response.

\begin{figure}
    \centering
    \begin{tabular}{ccc}
        (a) & (b) & (c)\\
    \includegraphics[width=0.3\textwidth]{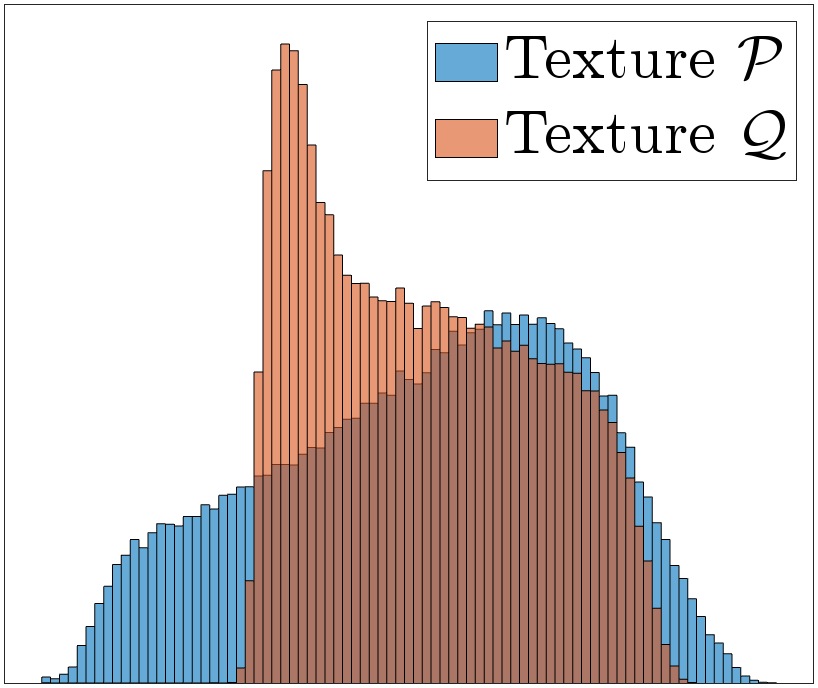} &
    \includegraphics[width=0.3\textwidth]{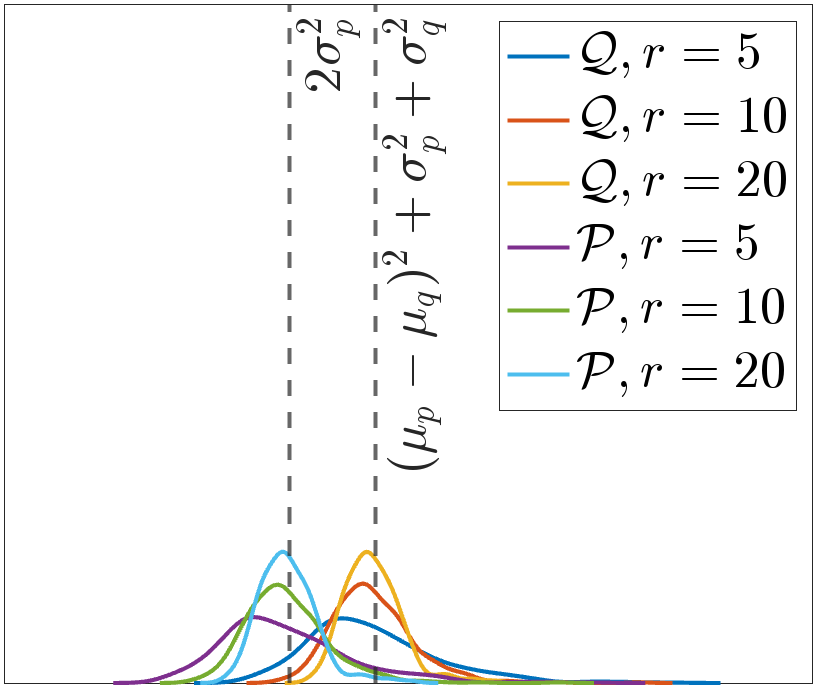} &
    \includegraphics[width=0.3\textwidth]{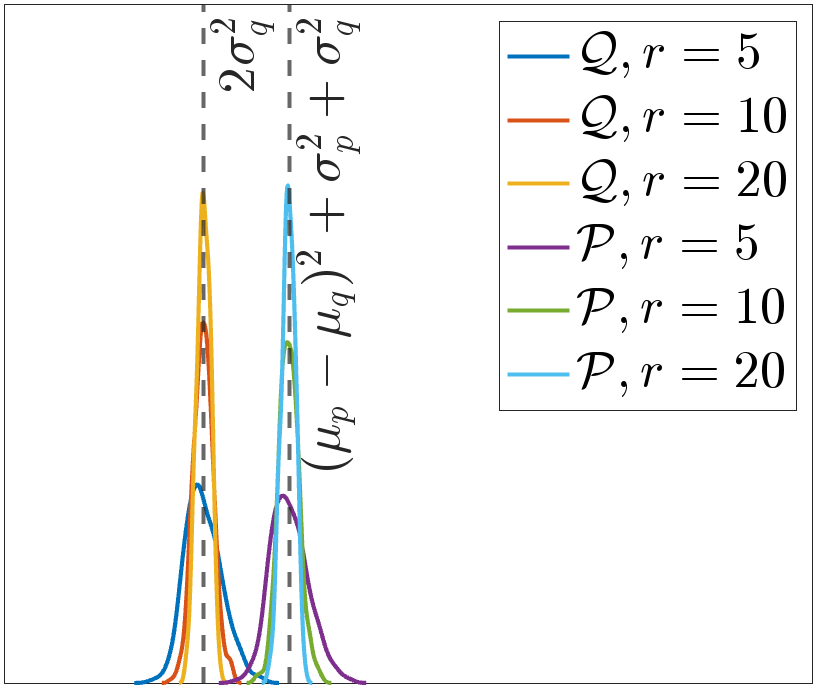}  
    \end{tabular}
    \caption{
    (a) The intensity histograms of two textures in Figure \ref{fig:analysis_experiment} (a). (b) Estimated distributions of the patch response $\mathbb{E}_{\mathbf{y}|\mathbf{x}}\left(\mathcal{R}(\mathbf{y}; \mathbf{x})\right)$ as $r$ increases, assuming the patch centered at $\mathbf{x}$ is sampled from $\mathcal{P}$, and $\mathbf{y}$ from $\mathcal{P}$ or $\mathcal{Q}$ as indicated in the legend. (c) Same as (b) assuming the patch centered at $\mathbf{x}$ is sampled from $\mathcal{Q}$.}
    \label{fig:response_pdf}
\end{figure}

\subsection{The patch width parameter $r$ and edge detection} \label{sec:patchsize}

The quality of edge detection depends on the intensity contrast of the patch response $\mathcal{R}(\mathbf{y};\mathbf{x})$.  This contrast is given by the responses observing $\mathcal{P}$ or $\mathcal{Q}$ by the patch centered at $\mathbf{x}$, and the regularity of  $\mathcal{R}(\mathbf{y};\mathbf{x})$ is related to the choice of $r$. 

For two texture separation, we first assume $R$ is chosen that $\mathcal{B}_R(\mathbf{x})$ contains two different textures $\mathcal{P}$ and $\mathcal{Q}$ with a fixed pixel $\mathbf{x}$ away from any texture boundary.  We use the squared Hellinger distance \cite{hellinger1909neue} of two probability density functions $f_1, f_2$ to compare the two different patch responses:
\begin{align}\label{eq:hellinger}
	\mathcal{H}^2(f_1, f_2) = 1 - \sqrt{\langle f_1, f_2\rangle} \in [0, 1],
\end{align}
here $\langle\cdot, \cdot\rangle$ denotes the inner product. Squared Hellinger distance \eqref{eq:hellinger} is a bounded metric that measures the similarity of the probability density functions $f_1, f_2$ in terms of  the overlap. 
In Figure \ref{fig:Hellinger}, the blue curve  indicates the squared Hellinger distance of the patch responses of observing textures $\mathcal{P}$ and $\mathcal{Q}$ from the perspective of patch $\vec{\mathcal{P}}(\mathbf{x})$ 
and the red curve is from the perspective of patch $\vec{\mathcal{Q}}(\mathbf{x})$.  These curves represents the differences of the density functions shown in Figure \ref{fig:response_pdf} (b) and (c).  As $r$ increases, two responses get better separated in Figure \ref{fig:response_pdf} (b) and (c), which is represented as the increasing value of squared Hellinger distance. 
The growth of two blue and red curves are different as $r$ increases, which is due to the difference in the variance of two textures $\mathcal{P}$ and $\mathcal{Q}$ in Figure \ref{fig:response_pdf}.  
The horizontal dash line in Figure \ref{fig:Hellinger} shows a wide range of $r$ which gives the separation of two textures. 
\begin{figure}
    \centering
\includegraphics[width=0.45\textwidth]{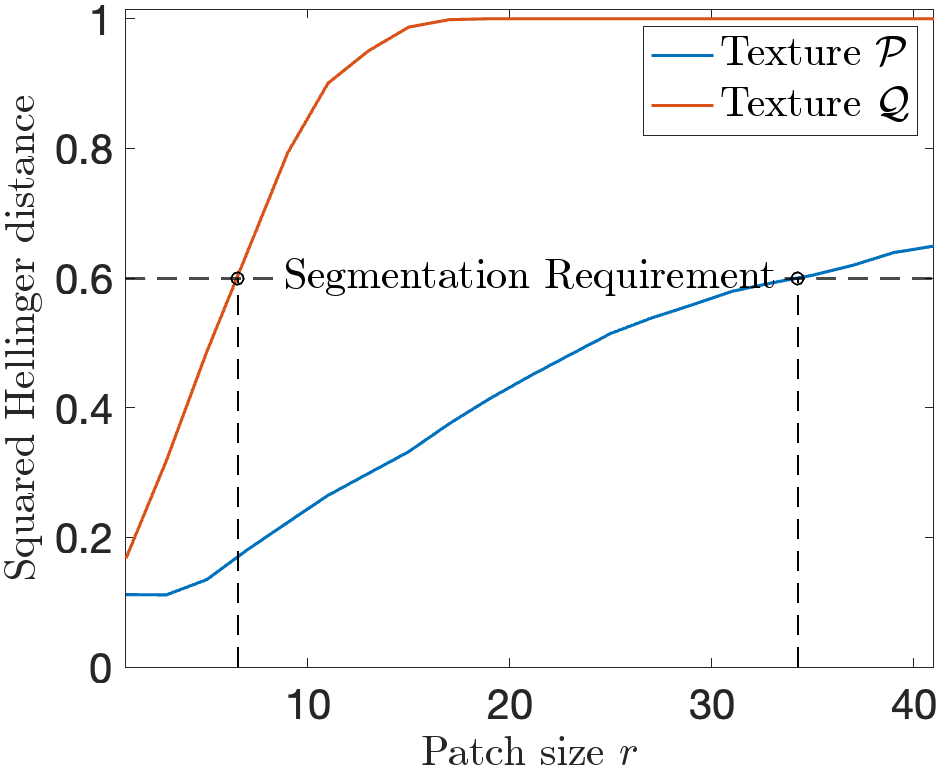}
    \caption{Change of distance of distribution function of $\mathbb{E}_{\mathbf{y}|\mathbf{x}}(\mathcal{R}(\mathbf{y};\mathbf{x}))$ with respect to the patch width parameter $r$. Blue line assumes the observer $\mathbf{x}$ being equipped with $\mathcal{P}$, while the red line assumes $\mathbf{x}$ being equipped with $\mathcal{Q}$. The horizontal dashed line indicates the required minimal distance for two textures to be distinguished by the segmentation model.}
    \label{fig:Hellinger}
\end{figure}
In practice, the patch width parameter $r$ only needs to meet segmentation requirement of one of two adjacent textures.

\section{Numerical Details} \label{sec:Numerical}

We summarize the proposed method in Algorithm \ref{alg:edge_detection} which includes following   modifications for an efficient computation.  
\begin{algorithm2e}
	\SetKwInOut{Input}{Input}\SetKwInOut{Output}{Output}
	\Input{The given image $U$, the patch width parameter $r$, the large comparison region width parameter $R$ , the regularity parameter $\lambda$ 
 for the segmentation model \eqref{eq:multiphase}, and  the parameter $\delta$ for  modification \eqref{eq:numerical r}.  }
    Initialize $V$ as a zero matrix of the size of $U$\;
	\For{$\mathbf{x}\in\Omega$}{
        \For{$\mathbf{y}\in\mathcal{B}_R(\mathbf{x})$}{
        Compute $\mathcal{R}(\mathbf{y};\mathbf{x})$ in  \eqref{e:response}, and modify to $\hat{\mathcal{R}}(\cdot;\mathbf{x})$ as in  (\ref{eq:numerical r})\;
        }
        Compute $\mathcal{W}(\cdot; \mathbf{x})$ from the segmentation (\ref{eq:w}), and modify to $\hat{\mathcal{W}}(\cdot; \mathbf{x})$ as in (\ref{eq:numerical w})\;
	Update $V\vert_{\mathcal{B}_R(\mathbf{x})} \leftarrow V\vert_{\mathcal{B}_R(\mathbf{x})} + \frac{1}{(2R+1)^2}\hat{\mathcal{W}}(\cdot;\mathbf{x})$\;
	}
	\Output{$V$ the edge function of the given image $U$. }
	\caption{Texture Edge Detection by Patch consensus\label{alg:edge_detection}}
\end{algorithm2e}

First, when computing the patch response $\mathcal{R}(\cdot;\mathbf{x})$, if two points $\mathbf{x}$ and $\mathbf{y}$ are very close, i.e. $\mathbf{y}$ is inside  $\mathcal{B}_\delta(\mathbf{x})$ for $\delta$ small, the patches $\vec{\mathcal{P}}(\mathbf{y})$ and $\vec{\mathcal{P}}(\mathbf{x})$ overlapped in most parts.  This results in unwanted singularity around the center of $\mathcal{R}(\cdot;\mathbf{x})$.  We remove this center singularity with a local average:
\begin{align}
    \hat{\mathcal{R}}(\mathbf{y};\mathbf{x}) = 
    \begin{cases}
        \frac{1}{\abs{\mathcal{B}_R(\mathbf{x})/\mathcal{B}_\delta(\mathbf{x})}}\sum_{\textbf{z} \in\mathcal{B}_R(\mathbf{x})/\mathcal{B}_\delta(\mathbf{x})} \mathcal{R}(\textbf{z};\mathbf{x}) &\quad\text{if } \|\mathbf{y}-\mathbf{x}\|_{\infty} \leq \delta,\\
        \mathcal{R}(\mathbf{y};\mathbf{x}), &\quad\text{otherwise}.
    \end{cases}\label{eq:numerical r}
\end{align}
The patch response $\mathcal{R}(\mathbf{y};\mathbf{x})$ in the subdomain $\mathcal{B}_\delta(\mathbf{x})$ is replaced by the average over its complement $\sum_{\textbf{z} \in\mathcal{B}_R(\mathbf{x})/\mathcal{B}_\delta(\mathbf{x})} \mathcal{R}(\textbf{z};\mathbf{x})$. In practice, we choose  $\delta = 5$ when the patch width parameter $r\in[10, 30]$.

Secondly, when $\mathcal{B}_R(\mathbf{x})$ is close to, but not overlapped with, any texture edge, the patch centered at some pixel $\mathbf{y}\in\mathcal{B}_R(\mathbf{x})$ may still see the texture edge outside $\mathcal{B}_R(\mathbf{x})$.  This can cause the local edge function  $\mathcal{W}(\mathbf{y};\mathbf{x})$ to report a false positive edge inside $\mathcal{B}_R(\mathbf{x})$, and give thick and blurry edge on $V$. We make the local edge function  $\mathcal{W}(\mathbf{y};\mathbf{x})$ to only respond  within $\mathcal{B}_R(\mathbf{x})$, by the following modification 
\begin{align}\label{eq:numerical w}
    \hat{\mathcal{W}}(\mathbf{y};\mathbf{x}) = 
        \begin{cases}
            \mathcal{W}(\mathbf{y};\mathbf{x}) &\quad\text{if }\mathrm{d}(\mathbf{y},\partial B_R(\mathbf{x})) > r,\\
            0 &\quad\text{otherwise,}
        \end{cases}
\end{align}
where $\mathrm{d}(\mathbf{y},\partial B_R(\mathbf{x}))$ is the distance of pixel $\mathbf{y}$ to the boundary of $\mathcal{B}_R(\mathbf{x})$, i.e.,
\[
    \mathrm{d}(\mathbf{y},\partial B_R(\mathbf{x})) = \min\left\{\|\mathbf{y} - \mathbf{z}\|_\infty ~|~ \mathbf{z}\in\partial \mathcal{B}_R(\mathbf{x})\right\}.
\]
 
Thirdly, we bound the number of phases to be $K\in\{1, 2\}$ in the segmentation step.  When $K = 1$, the energy \eqref{eq:multiphase} reduces to the variance of the given image. The effect of the parameter $\lambda$ can be interpreted as a threshold on the segmentation model to give one or two phases.  We set $\lambda$ = 0.01 to 0.05, when normalized patch response $\mathcal{R} \in [0,1]$ is used.  When the given image range is $U \in [0,255]$ and the patch response is not normalize, we use $\lambda$ = 450 to 1,000.    
In Figure \ref{fig:Hellinger}, the horizontal dashed line represents the distance threshold for the two textures to be separated, i.e. the segmentation model to choose $K=2$. The $\lambda$ controls the regularity for the local edge function  $\mathcal{W}(\mathbf{y};\mathbf{x})$, and efficiently reduce the unwanted edge detected. With $\lambda$ fixed, textures requires different patch width parameters $r$ to find an edge (if there is one).

\section{Numerical Experiments}\label{sec:experiments}

In this section, we present numerical results exploring different aspects of the proposed model. 
First,  Figure \ref{fig:A_response} represents the procedure of the proposed method. In the center figure, for each point $\mathbf{x}$, yellow boxes show the local patch $\vec{\mathcal{P}}(\mathbf{x})$ with $r=3$, and the blue boxes show the patch responses $\mathcal{R}(\mathbf{y};\mathbf{x})$ in $\mathcal{B}_R (\mathbf{x})$. 
For each zoomed location, we present the yellow local patch $\vec{\mathcal{P}}(\mathbf{x})$, patch response  $\mathcal{R}(\mathbf{y};\mathbf{x})$ and the local edge function $\mathcal{W}(\mathbf{y};\mathbf{x})$. 
For $\mathbf{x}_1$, $\mathbf{x}_3$, and $\mathbf{x}_4$, two regions are identified and an edge is found between two textures. For $\mathbf{x}_6$, two edges are found separating the patch response to three regions, here two of these three regions represents the same textured region.  
Notice for $\mathbf{x}_2$ and $\mathbf{x}_5$, although textures are changing and patch response shows some textures, they are identified to be the same textured regions and no edges are found.  
\begin{figure}
\centering
\includegraphics[width = 0.84\textwidth]{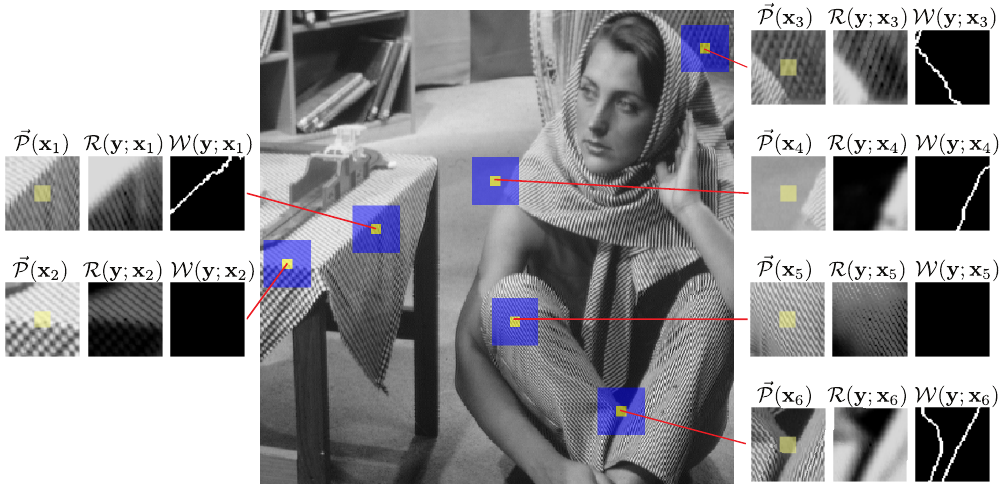}
\caption{For various locations $\mathbf{x}$, yellow boxes show the local patch $\mathcal{P}(\mathbf{x})$ with $r=3$, and the blue boxes show the patch responses $\mathcal{R}(\mathbf{y};\mathbf{x})$ in $\mathcal{B}_R$.  The zoomed-in images also show the local edge function $\mathcal{W}(\mathbf{y};\mathbf{x})$. 
For $\mathbf{x}_1$, $\mathbf{x}_3$, $\mathbf{x}_4$ and $\mathbf{x}_6$, edges are clearly found, while for $\mathbf{x}_2$ and $\mathbf{x}_5$, although patch responses show some textures, they are identified to be the same textured regions and no edges are found. }
\label{fig:A_response}
\end{figure}

\subsection{Real images with texture}

We represent the texture edge detection result for real textured images, and show comparison with Canny edge detection \cite{canny1986computational}.
In Figure \ref{fig:barbara}, TEP finds texture and object boundaries without finding edges within textures.  Zoom-in of the red and the yellow boxes in (a) are presented in (d)-(g), where (d) and (f) shows how TEP $V(\mathbf{x})$ only finds the boundary of the textures. 
In (d), TEP result considers the checkerboard texture as one region, and is able to detect the subtle transitions at the corner of the table. The train rail is considered as an entity, despite the track lines in (d), while, the Canny edge detection in (e) finds sharp gradient change as edges, and finds the edges of the checkerboard pattern also.   
In (f), notice that the shades caused by wrinkles are ignored by the proposed model, while it is captured by Canny edge detection in (g).  For TEP, $r = 3,  R = 35$, and $\lambda = 1000$ are used, while for the Canny edge detection, we used $(0.04, 0.1)$ for hysteresis thresholding and $\sigma = 2$ for Gaussian blurring. 
\begin{figure}
    \centering
    \begin{tabular}{ccc}
   (a) & (b) & (c) \\ 
{\includegraphics[width=0.25\textwidth]{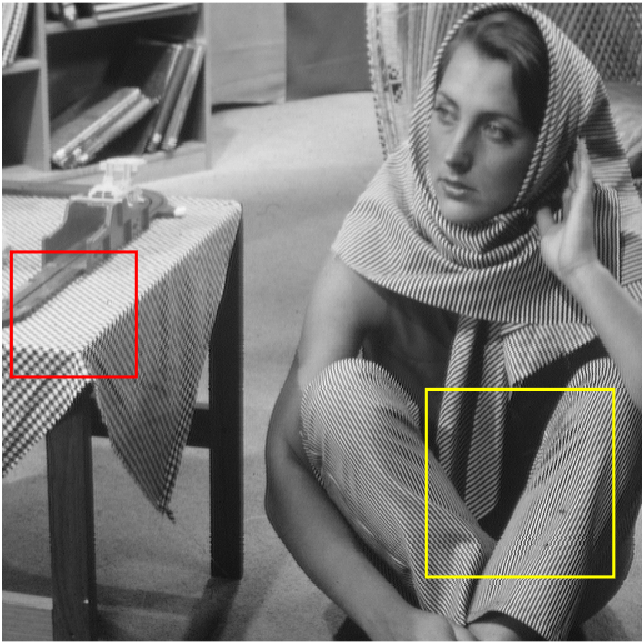}} &
{\includegraphics[width=0.25\textwidth]{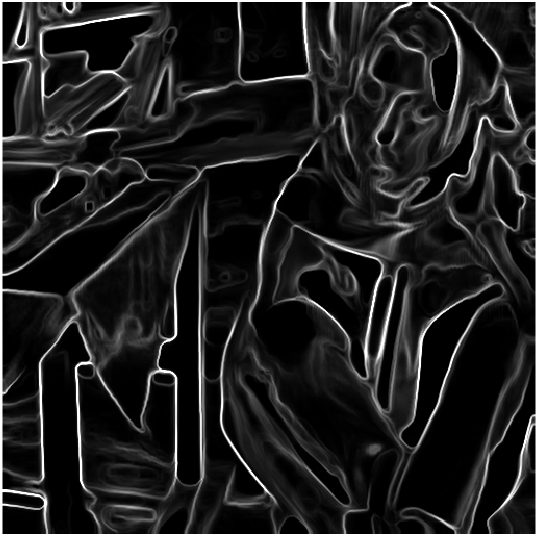}} &    
{\includegraphics[width=0.25\textwidth]{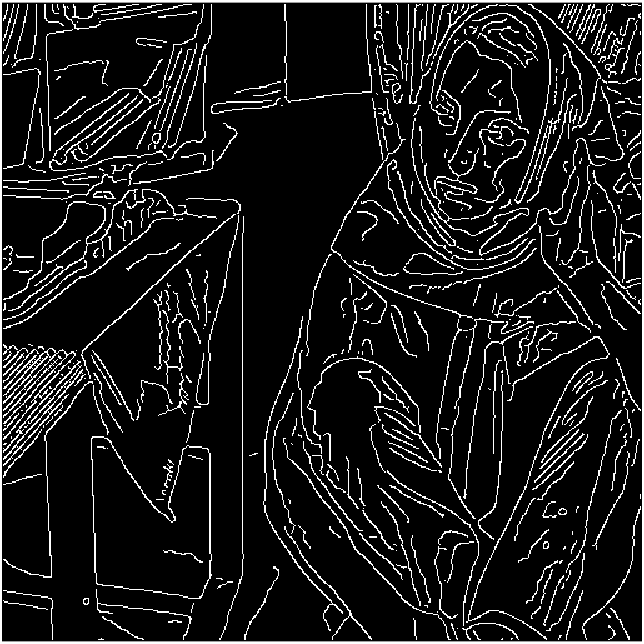}}\\
\end{tabular}
  \begin{tabular}{cccc}
  (d) & (e) & (f) & (g) \\{\includegraphics[width=0.18\textwidth]{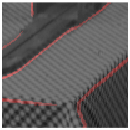}} &    
{\includegraphics[width=0.18\textwidth]{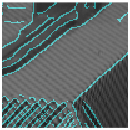}} &   
{\includegraphics[width=0.18\textwidth]{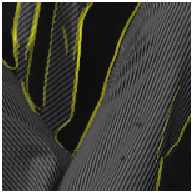}} &    
{\includegraphics[width=0.18\textwidth]{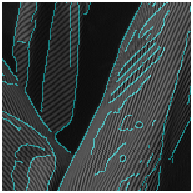}}
    \end{tabular}
    \caption{(a) The given image. (b) TEP result $V(\mathbf{x})$. (c) Canny edge detection. (d) and (e) are zoom-in of the red box, and  (f) and (g) are that of the yellow box in (a). TEP edges $V(\mathbf{x})$ are red edge in (d) and yellow edge in (f). (e) and (g) show Canny edge detection in Cyan. TEP finds the texture edges without finding the edges inside one texture.   }
    \label{fig:barbara}
\end{figure}
In Figure \ref{fig:worm}, first two rows (a)-(g), the worm details are understood as texture in (d). For TEP, $r = 5, R = 20$, and $\lambda = 800$ are used, and for Canny edge detector, threshold parameters $\{0.04, 0.1\}$ and $\sigma=2$ for Gaussian filter are used.
In Figure \ref{fig:worm} last row, the details of the hair is understood as texture by TEP, while Canny edge detection finds the details.  For TEP, $r = 5, R = 30$, and $\lambda = 450$ are used, and for Canny edge detector, threshold parameters $\{0.12, 0.3\}$ and $\sigma=2$ for Gaussian filter are used. 
TEP consistently represents the region better even for textures with complicated and large scale patterns.
\begin{figure}
    \centering
    \begin{tabular}{ccc}
        (a)&(b)&(c)\\
        \includegraphics[width=0.3\textwidth]{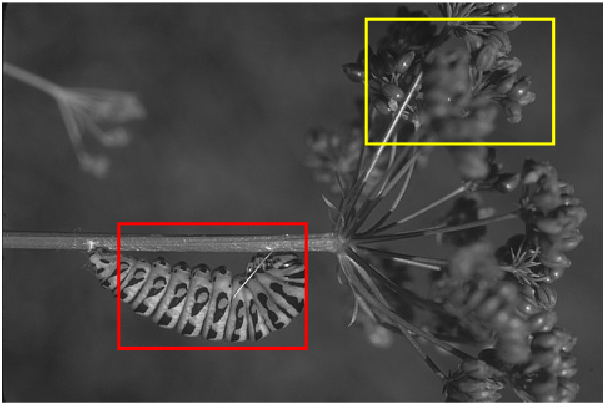}&               
        \includegraphics[width=0.3\textwidth]{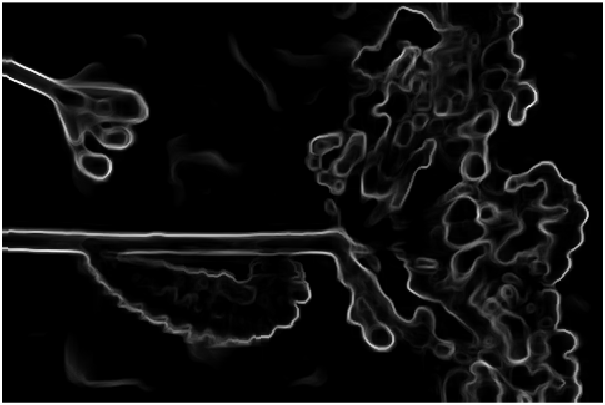}&
        \includegraphics[width=0.3\textwidth]{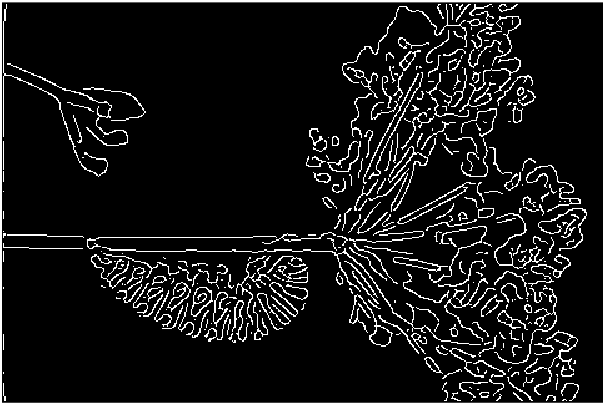}
    \end{tabular}
    \begin{tabular}{cccc}
        (d)&(e)&(f)&(g)\\
        \includegraphics[width=0.2\textwidth]{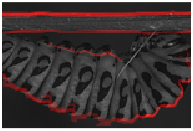}&
        \includegraphics[width=0.2\textwidth]{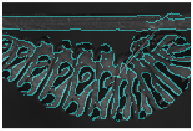}&
        \includegraphics[width=0.2\textwidth]{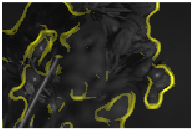}
        & \includegraphics[width=0.2\textwidth]{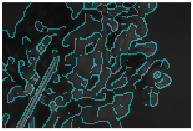}
    \end{tabular}    
        \begin{tabular}{ccc}
        (h)&(i)&(j)\\
       \includegraphics[width=0.25\textwidth]{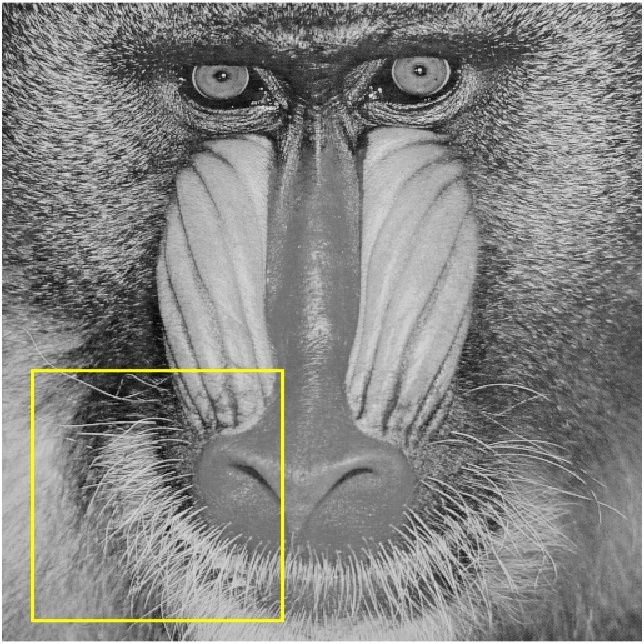}&
         \includegraphics[width=0.25\textwidth]{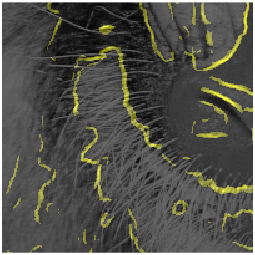} & 
        \includegraphics[width=0.25\textwidth]{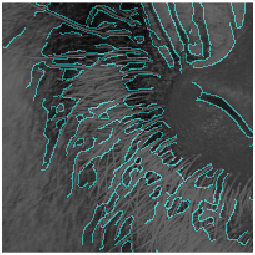} 
    \end{tabular}    
    \caption{(a) The given image. (b) TEP result $V(\mathbf{x})$. (c) Canny edge detection. (d) and (e) are zoom-in of the red box, and  (f) and (g) are that of the yellow box in (a). TEP edges $V(\mathbf{x})$ are red edge in (d) and yellow edge in (f). (e) and (g) show Canny edge detection in Cyan.  Texture of the worm using TEP is clearly understood as texture in (d) and red edge finds the boundary of the worm. (h) The given image.  (i) and (j) are zoom-in of the yellow box in (h).  (i) TEP result $V(\mathbf{x})$ in yellow. (j) Canny edge detection in cyan.  TEP finds hairy region as one texture.}
    \label{fig:worm}
\end{figure}

TEP is a training-free method for texture edge detection.  Yet, in Figure \ref{fig:DP_Compare}, we present images from the Berkeley segmentation dataset BSDS500, and compare with the state-of-the-art machine learning model, Edge Detector with Transformer (EDTER) \cite{pu2022edter} as an example.    
Since the Berkeley Segmentation dataset  for edge detection was published \cite{MartinFTM01}, it has been a benchmark for contour detection, especially in machine learning community \cite{xie15hed, liu2017richer, pu2022edter, he2019bi}.  These methods are trained with color images with ground-truth data provided by human experts \cite{MartinFTM01} that these methods aim at object detection.
On the other hand, We apply TED only on gray scale images without any a priori knowledge of the image.  TEP detects local texture edges, and this is not an object detection method.  Even then,  in Figure \ref{fig:DP_Compare}, TEP shows good edge detection and provides comparable results to the deep learning model.  
In the first row images, TEP and EDTER both finds large scale region with bricks (while Canny edge detection finds details of the bricks).  TEP gives different strength to the edge, some parts are weaker edges than others, while EDTER gives the same strength, since it is object oriented contour detection.   
In the second and third rows, TEP edge is closer to the given image, grouping different texture correctly, and TEP finds irregular texture boundary. In the last row, while TEP finds more details of the dress, EDTER is simplified, and TEP sees the texture of the flood and finds the edge of the texture, while EDTER finds the edges in the floor tiles.  With texture edge detection, TEP can give comparable good edge detection results. 
\begin{figure}
    \centering
    \begin{tabular}{cccc}
    (a)&(b)&(c)&(d)\\
        \includegraphics[width = 0.2\linewidth]{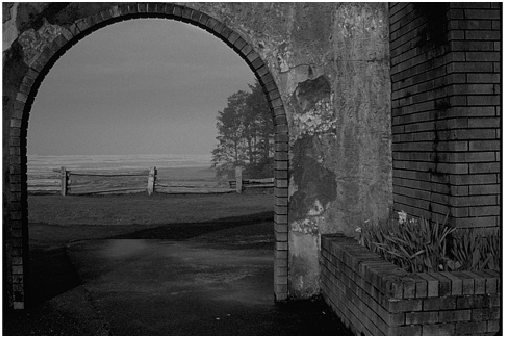} &
        \includegraphics[width = 0.2\linewidth]{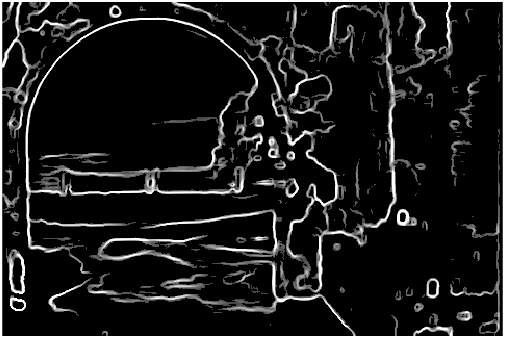} &
       \includegraphics[width = 0.2\linewidth]{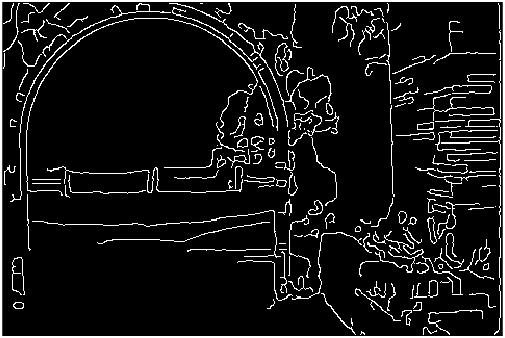} &
        \includegraphics[width = 0.2\linewidth]{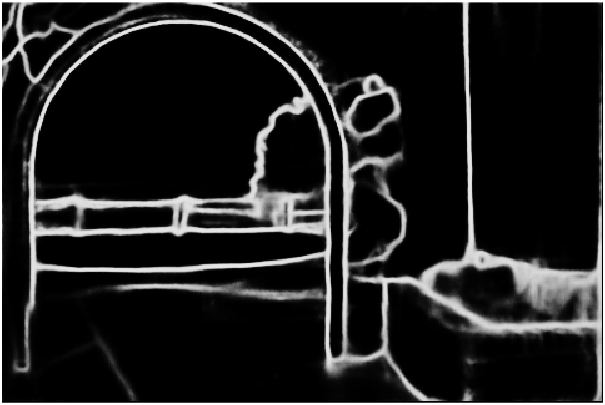}\\
        \includegraphics[width = 0.2\linewidth]{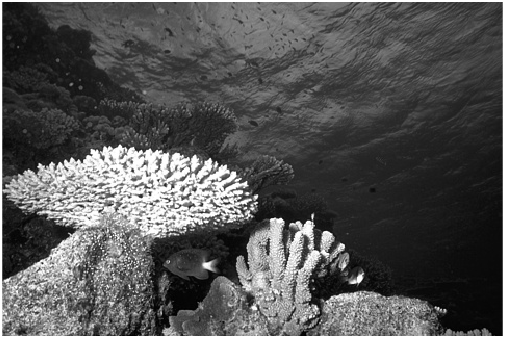} &
        \includegraphics[width = 0.2\linewidth]{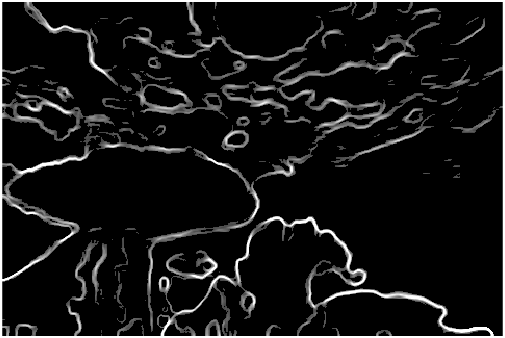} &
         \includegraphics[width = 0.2\linewidth]{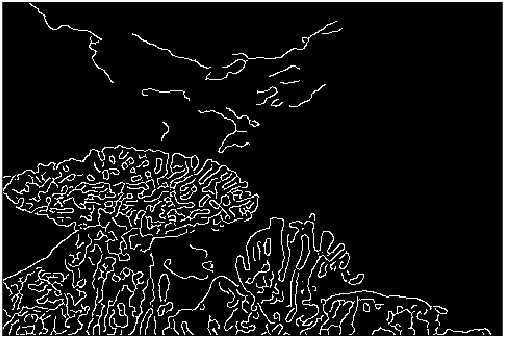} &
        \includegraphics[width = 0.2\linewidth]{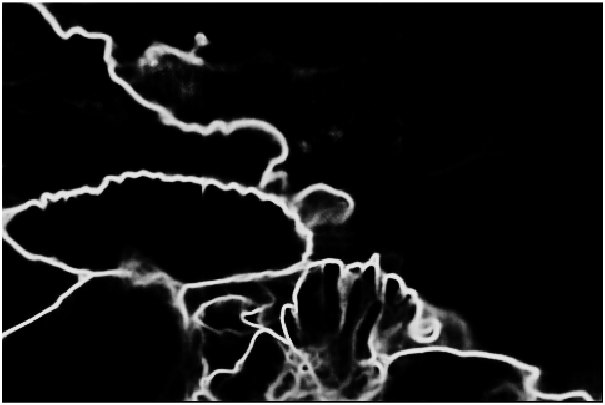} \\
        \includegraphics[width = 0.2\linewidth]{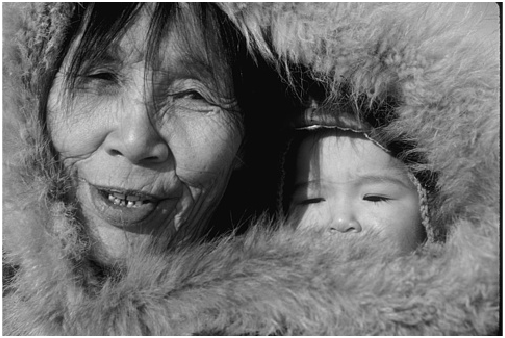} &
        \includegraphics[width = 0.2\linewidth]{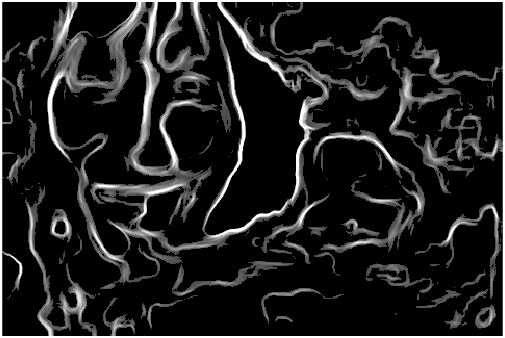} &
         \includegraphics[width = 0.2\linewidth]{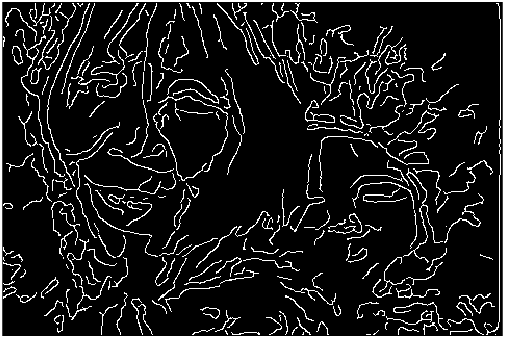} &
        \includegraphics[width = 0.2\linewidth]{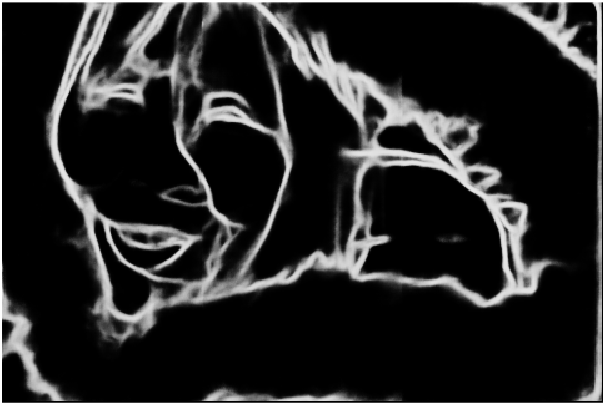} \\
           \includegraphics[width = 0.2\linewidth]{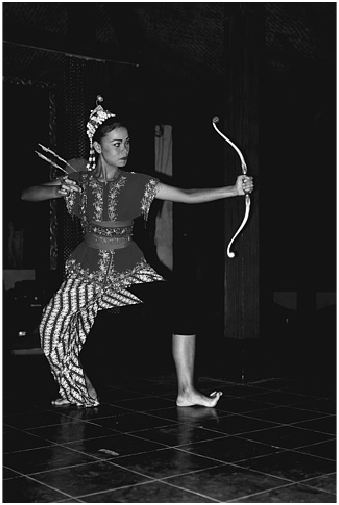} &
        \includegraphics[width = 0.2\linewidth]{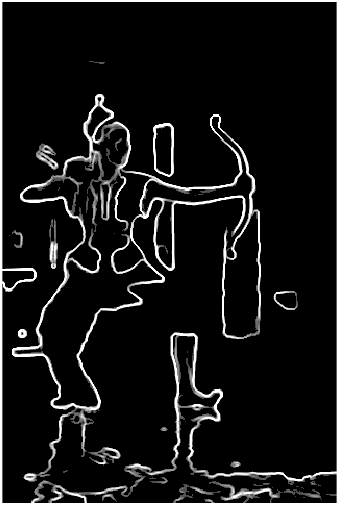} &
       \includegraphics[width = 0.2\linewidth]{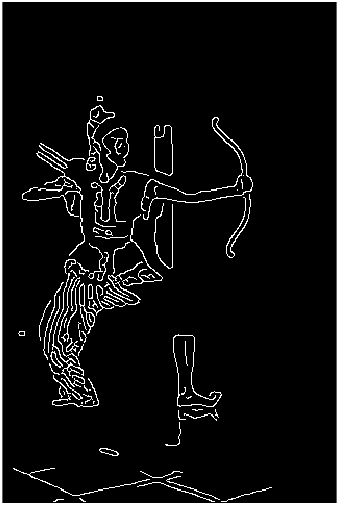} &
        \includegraphics[width = 0.2\linewidth]{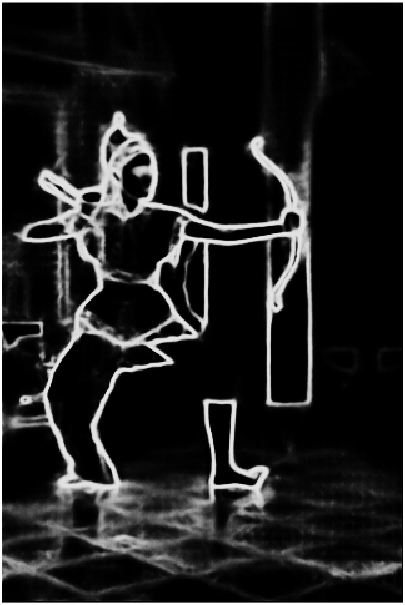}
    \end{tabular}
    \caption{Images from BDSD500 Dataset. First column (a) are the given images. (b) TEP results. (c) Canny edge detection. (d) EDTER results.}
    \label{fig:DP_Compare}
\end{figure}

\subsection{The scale of the texture vs the patch width parameter}

The patch width parameter $r$ can be adjusted to find different scales in the image.  In Figure \ref{fig:scale experiment}, we experiment with image (a) which has different scales of texture for each object.   The background has the smallest texton - the smallest repeating unit in the texture.  The triangle, the circle and the square all have different sizes of texton in increasing order.  From image (b) to (d), the patch width parameter $r$ is increasing from $r=4, 6$ to 8, and as $r$ increases, TEP sees bigger patches as a texture.   In (b) with a small $r$, each texton in the square is identifies as a separate region, since it understands the texture only in the small scale that each texton within the square is understood as a separate region.  In the circle while the edge of the circle is identified, within the circle it also shows some texture details.  In image (d), even the big texton is captured by a large $r$, that all objects clearly shows the texture edge boundary. When $r$ is large enough, TEP ignores the fine details within the textured region. 
\begin{figure}
	\centering
    \begin{tabular}{cccc}
         (a)&(b)&(c)&(d)  \\
        \includegraphics[width=0.22\textwidth]{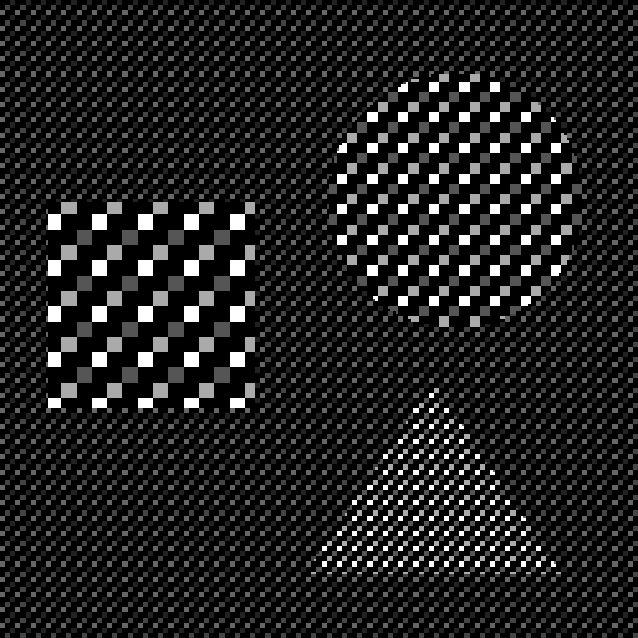}&
        \includegraphics[width=0.22\textwidth]{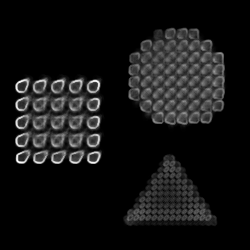}&
        \includegraphics[width=0.22\textwidth]{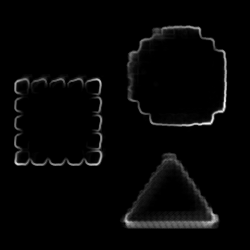}&
        \includegraphics[width=0.22\textwidth]{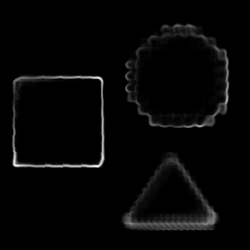}   
    \end{tabular}	
	\caption{(a) Synthetic texture image with different sizes of texton. For different $r$, (b) $r = 4$, (c) $r = 6$, (d) $r = 8$, TEP finds different scale of texture.  (Other parameters are fixed as $\lambda = 0.02, R = 20$.)  When $r$ is small, only texture with small textons are identified as one region.  In (b) the square, each texton is identified as one region.  When $r$ is large, even the large texture is identified as one region, and clearer texture edge is found. }
    \label{fig:scale experiment}
\end{figure}
Figure \ref{fig:starfish} shows real example in (a), using $r=1$ for (b) and $r=7$ for (c).   In (b), the starfish shape is identified but with many details, since with a small $r$, only very small scale texture is identified as one region.   In (c), with a large $r$, larger textures, e.g., inside the starfish, is identified as one textured region, and only boundary of the starfish is emphasized.  As the patch width parameter increase, TEP focuses on large scale structure of the given image, while grouping small details as one region.
\begin{figure}
    \centering
    \begin{tabular}{ccc}
        (a) & (b) & (c)\\
        \includegraphics[width = 0.27\linewidth]{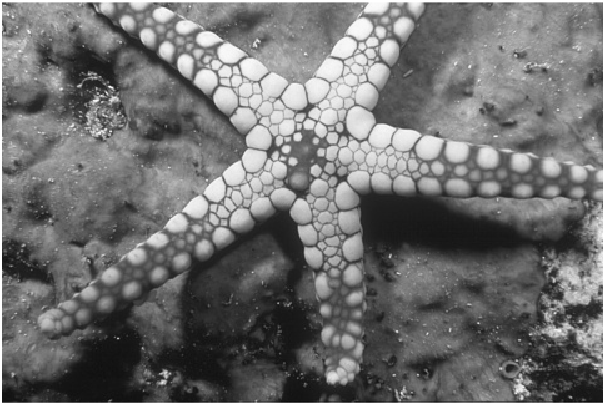}&
        \includegraphics[width = 0.27\linewidth]{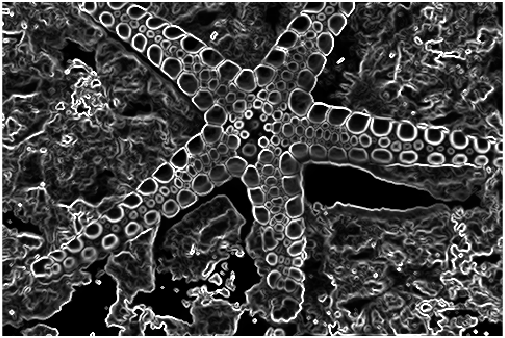}&
        \includegraphics[width = 0.27\linewidth]{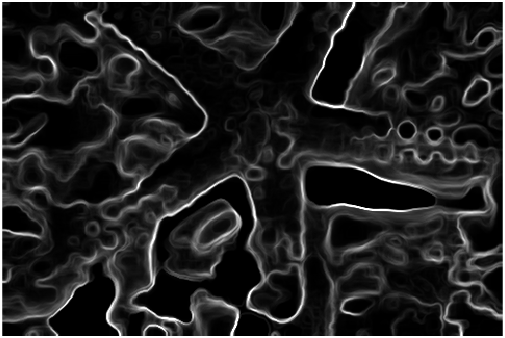}
    \end{tabular}
    \caption{(a) The given image. The TEP results are shown with (b) $r = 1$, (c) $r = 7$. The comparison window parameter $R=35$ is fixed for both results.}
    \label{fig:starfish}
\end{figure}

\subsection{Robustness Against Noise and Multiple junctions}

We test the robustness of TEP against different levels and types of noise. Figure \ref{fig:g_noise} shows the TEP result against additive Gaussian noise with increasing variance, and the TEP result  against increasing salt-and-pepper noise.  In the first row (a), Gaussian noise with variance 0.02, 0.04, 0.06, 0.08 to 0.1 are added from the left to the right.  In the second row (b), they are  TEP results with parameters $r = 5, R = 20$, and $\lambda = 0.018$ (using normalized patch response).   In the third row (c), textured images with Salt and Pepper noise ranging from $10\%, 20\%, 30\%, 40\%$, to $50\%$ are shown from the left to the right.  In the forth row (d), TEP results are shown with the same parameters $r = 5, R = 20$, and $\lambda = 0.018$ (with normalized patch response). 
As more noise are added, some parts of edge strength gets weaker.  Otherwise, TEP shows robustness against Gaussian and Salt and Pepper noise.   In Figure \ref{fig:g_noise} (d) row, clear edge is detected up to 40-50\% of Salt and Pepper noise. 
\begin{figure}
	\centering
    \begin{tabular}{cccccc}	
    (a) & 
\includegraphics[width=.15\textwidth]{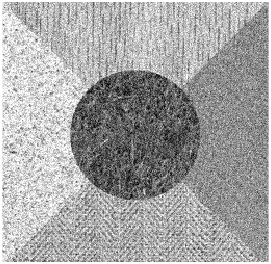}&
  \includegraphics[width=.15\textwidth]{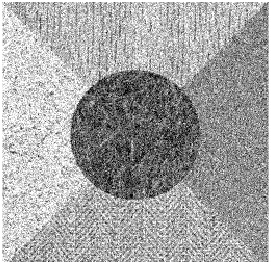}&	
  \includegraphics[width=.15\textwidth]{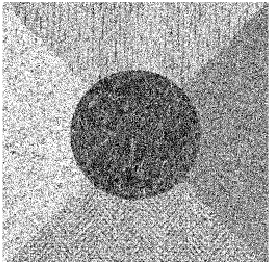}& 	
  \includegraphics[width=.15\textwidth]{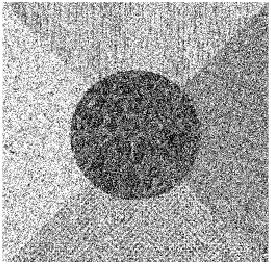}&
\includegraphics[width=.15\textwidth]{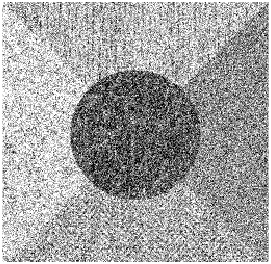}\\		
(b) & 
   \includegraphics[width=.15\textwidth]{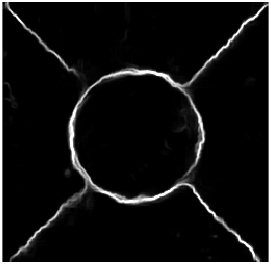}&
  \includegraphics[width=.15\textwidth]{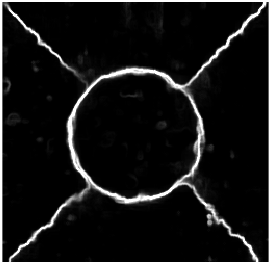}&		
  \includegraphics[width=.15\textwidth]{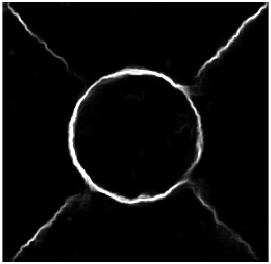}& 	
  \includegraphics[width=.15\textwidth]{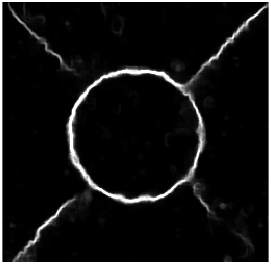}&
  \includegraphics[width=.15\textwidth]{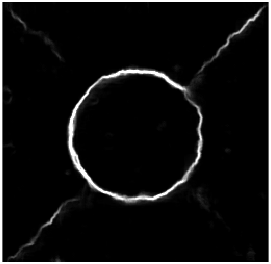}
    \end{tabular}
        \begin{tabular}{cccccc}	
        (c) & 	
\includegraphics[width=.15\textwidth]{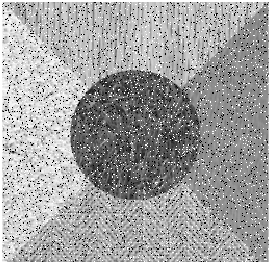}&
  \includegraphics[width=.15\textwidth]{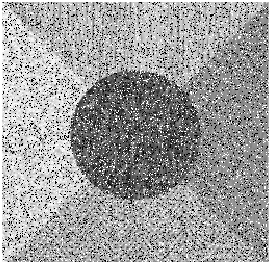}&	
  \includegraphics[width=.15\textwidth]{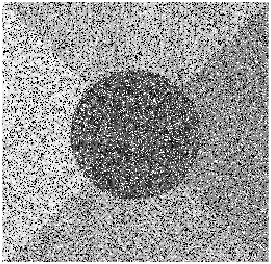}& 	
  \includegraphics[width=.15\textwidth]{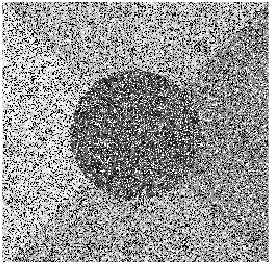}&
 \includegraphics[width=.15\textwidth]{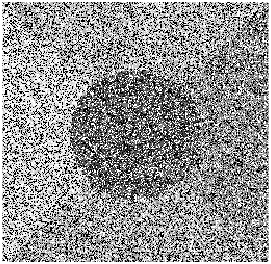}\\	
 (d) & 
   \includegraphics[width=.15\textwidth]{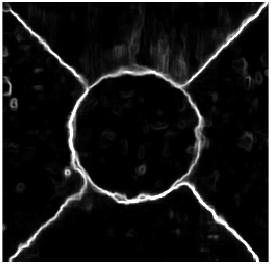}&
\includegraphics[width=.15\textwidth]{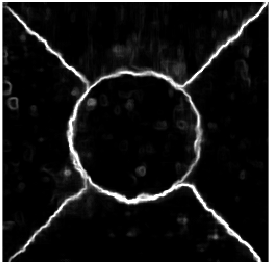}&	\includegraphics[width=.15\textwidth]{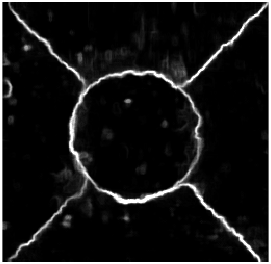}& 
  \includegraphics[width=.15\textwidth]{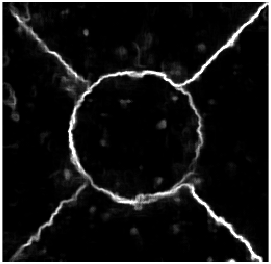}&
  \includegraphics[width=.15\textwidth]{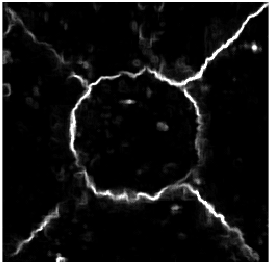}
    \end{tabular}
\caption{First row (a): Textured image with additive Gaussian noise. From the first to the last image, Gaussian noise with variance 0.02, 0.04, 0.06, 0.08 to 0.1 are added (which corresponds to SNR = $17.01, 14.55, 13.30, 12.48$, to $11.89$).  
Second row (b) shows the results of TEP showing the edge detection. 
Third row (c): Textured image with Salt and Pepper noise ranging from $10\%, 20\%, 30\%, 40\%$, to $50\%$.   
Forth row (d) shows the results of TEP showing the edge detection.  Parameters $r = 5, R = 20$, and $\lambda = 0.018$ (with normalized patch response) are used for both set of experiments. TEP shows robustness against noise. }
\label{fig:g_noise}    
\end{figure}

When detecting edges, finding sharp junctions can be challenging, e.g., due to multiple edges meeting at one point, it can easily get blurry.   In Figure \ref{fig:junction gradual change}, we experiment with images with multiple junctions of various textures. 
TEP computes the non-binary edge function $V$ by collecting local segmentation results, that as long as the texture can be separated, multiple junction can also be identified.  This is consistent with equation \eqref{eq:diffOfResponse} in Section \ref{sec:analysis}, the differences of average intensities help the proposed method to identify the texture boundaries.
Figure \ref{fig:junction gradual change} (b) and (d) show TEP results showing clear edges near the junction point.  The experiment results show TEP well handles $N-$junction problem with $N = 4$ and $N = 8$.
In (d), the strength of the edge, $V$ value may be not as high for some points near the junction in the center, and for a very accurate edge detection, multiple junction will impose challenges.  One can further improve the sharpness of edge detection with small modifications, which we further discussed in Appendix \ref{AS:junctions}.  

\begin{figure}
    \centering
    \begin{tabular}{cccc}
        (a)&(b)&(c)&(d)\\
         \includegraphics[width = 0.22\textwidth]{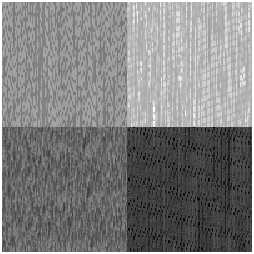}&
         \includegraphics[width = 0.22\textwidth]{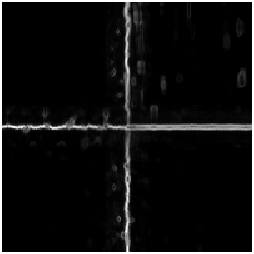}&
        \includegraphics[width = 0.22\textwidth]{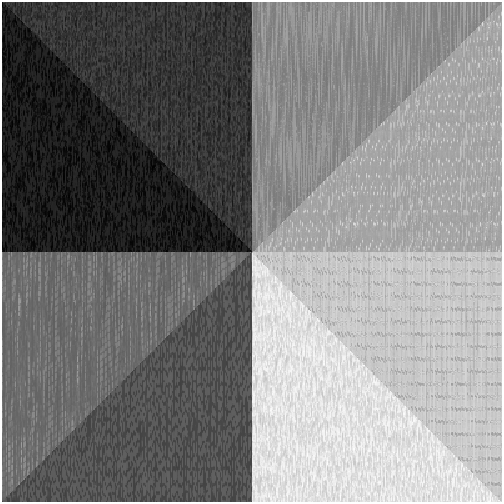}&
         \includegraphics[width = 0.22\textwidth]{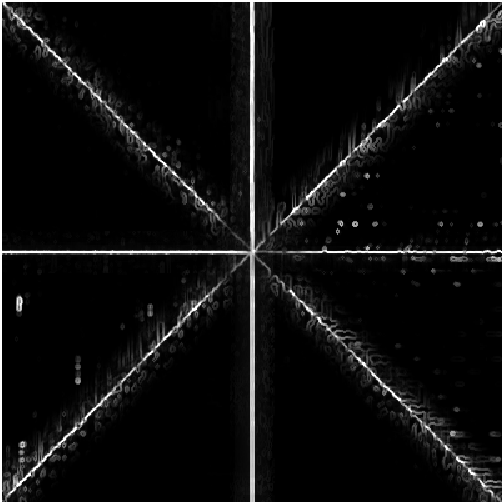}
        \end{tabular}
    \caption{(a) Texture image with 4 junction.  (b) TEP result of (a). (c) Texture image with 8-junction. (d) TEP result of (c).  TEP well handles $N-$junction problem with $N = 4$ and $N = 8$.  }
    \label{fig:junction gradual change}
\end{figure}

\subsection{Image segmentation using the edge function $V$ and image decomposition} \label{sec:segmentation}

Using the edge function $V$, we can design a color segmentation method, using chromaticity and brightness model \cite{chan2001total}.  We separate the given color image $\mathbf{U}_0: \Omega \to \mathbb{R}^3$ to the brightness $U_b: \Omega \to \mathbb{R}$ and the chromaticity $\mathbf{U}_c: \Omega \to \mathbb{S}^3 = \{\mathbf{x}\in\mathbb{R}^3 \mid \|\mathbf{x}\|_2 = 1\}$:
\begin{align*}
    \mathbf{U}_0 = U_b \cdot \mathbf{U}_c, \text{ where } 
    U_b = |\mathbf{U}_0| \text{ and } \mathbf{U}_c = \frac{\mathbf{U}_0}{U_b}.
\end{align*}
We use the edge function, and propose the following \textit{segmentation} functional for each chromaticity and brightness components, with $ \Tilde{\mathbf{U}} = \Tilde{U}_b \cdot \Tilde{\mathbf{U}}_c$,
\begin{align}
    \Tilde{U}_b &= \argmin_{U}  \int_{\Omega}  \left\{ g_\alpha ( V) |\nabla U|^2  + \gamma_1 |U - U_b|^2 \right\} d\mathbf{x}, \label{fnal:bright}\\
    \Tilde{\mathbf{U}}_c &= \argmin_{\mathbf{U}} (\mathbf{U}) = \int_{\Omega}  \left\{ g_\alpha (V) |\nabla \mathbf{U}|^2  + \gamma_2 |\mathbf{U} - \mathbf{U}_c|^2 + \beta (1 - |\mathbf{U}|)^2 \right \} d\mathbf{x},\nonumber 
\end{align}
where $g_\alpha(x):[0, 1] \to [0, 1]$ is an edge indication function,
$\displaystyle{ g_\alpha(x) = \frac{1-x^\alpha}{1 + x^\alpha}}$, 
such that $g_\alpha$ is strictly decreasing, and $g_\alpha(0) = 1,~ g_\alpha(1) = 0$ for $\alpha>0$.  
In order to utilize texture edge $V$ in an effective way, $g_\alpha$ needs to control the smoothness of $U$ inversely proportional to the strength of $V$ within the range of $V \in [0, 1]$.  In application, since $V$ is generated through consensus, $1-V$ is far from zero at texture edge, we choose $\alpha < 1$ to enhance the convexity of $g_\alpha (V)$, which creates wider region near $V = 1$ thus properly control the smoothness of $U$.

The functionals \eqref{fnal:bright} are minimized by considering the Euler-Lagrange equations with gradient decent with time evolution:
\begin{align*} 
    \frac{\partial\Tilde{U}_b}{\partial t} &= \operatorname{div}(g\nabla \Tilde{U}_b) + \gamma_1(\Tilde{U}_b - U_b),\\
    \frac{\partial\Tilde{\mathbf{U}}_c}{\partial t} &= \operatorname{div}(g\nabla \Tilde{\mathbf{U}}_c) + \gamma_2(\Tilde{\mathbf{U}}_c - \mathbf{U}_c) + \beta(1 - \frac{1}{|\Tilde{\mathbf{U}}_c|})\Tilde{\mathbf{U}}_c,
\end{align*}
using finite difference scheme.   We only used the brightness of the image to compute $V(\textbf{x})$.
Figure \ref{fig:demo_segmentation} (a) is the given image, (b) two-phase clustering  of image (a), (c) shows the segmentation result of \eqref{fnal:bright} and (d) is two-phase clustering  of image (c).  Within each region, small scale details are removed, while the edge is kept very sharp. 

\begin{figure}
\centering
 \begin{tabular}{cccc}  
 (a) & (b) & (c) & (d) \\
\includegraphics[width=0.22\textwidth, height=1.5in]{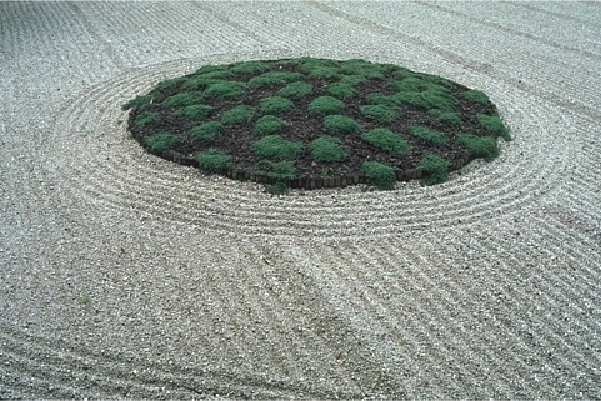}&
\includegraphics[width=0.22\textwidth,height=1.5in]{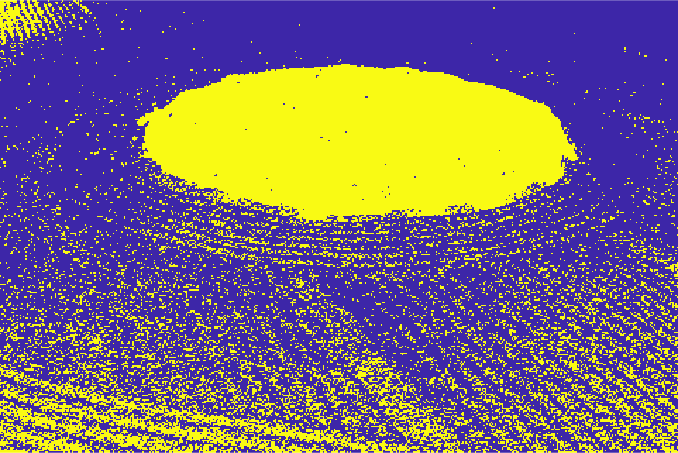} &
\includegraphics[width=0.22\textwidth,height=1.5in]{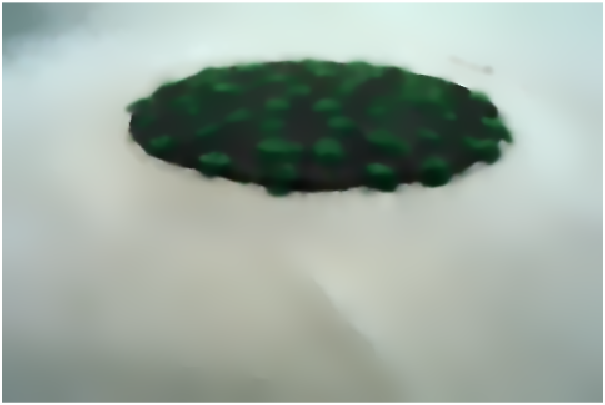} &
\includegraphics[width=0.22\textwidth,height=1.5in]{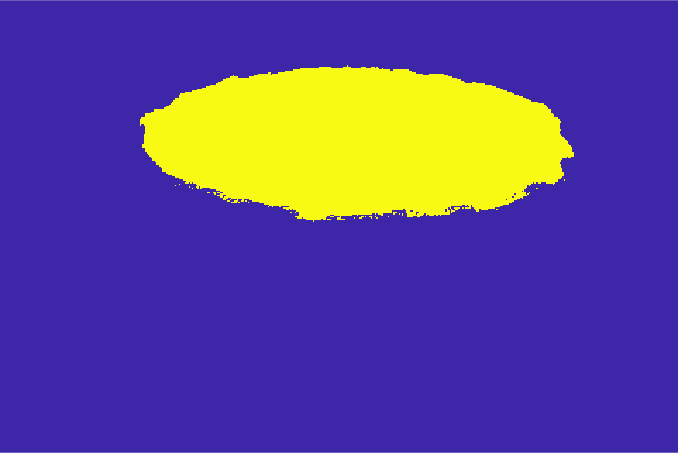}
\end{tabular}
\caption{(a) The given color image.  (b) Two-phase clustering of image (a).   (c) The proposed segmentation result in \eqref{fnal:bright}.  (d) Two-phase clustering of image (c).  While within each region is diffused, the texture edge is kept very sharp, clearly segmenting the image.  Notice how well texture edge is found using $V$. } 
\label{fig:demo_segmentation}
\end{figure}

We present more segmentation results in Figure \ref{fig:moreSeg}.  The brightness of the image is used to compute $V(\textbf{x})$, and we used  parameters  $r = 5, R = 25$, and $\lambda  = 400$, with $U \in [0,255]$. For $g_\alpha$,  $\alpha = 0.2$ is used and  $dt = 0.1$ for evolution.
The first row, the grains are identified as one texture, as well as some textures on the ground as another texture.  In the second row, branches with leafs, and grass regions are identifies as different textures.  In the third row, grains on the rock are identified as one texture, and they are well separated from the fur of the animal, even when the colors are  similar. In the forth row, the texture within the coral are identified as a texture, and detail of the oscillatory boundary are well identified.  
\begin{figure}
    \centering
    \begin{tabular}{ccc}    
(a) & (b) & (c) \\
\includegraphics[width=0.27\textwidth]{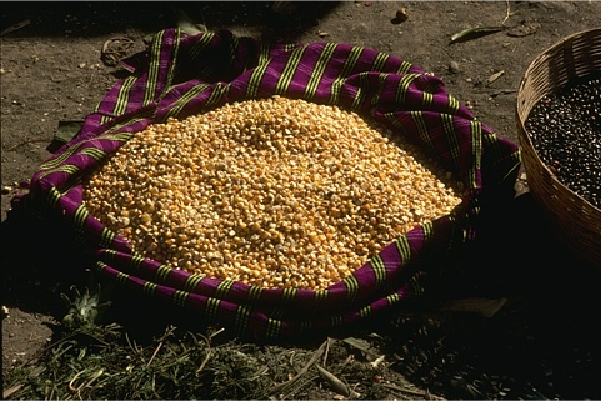}&
\includegraphics[width=0.27\textwidth, frame]{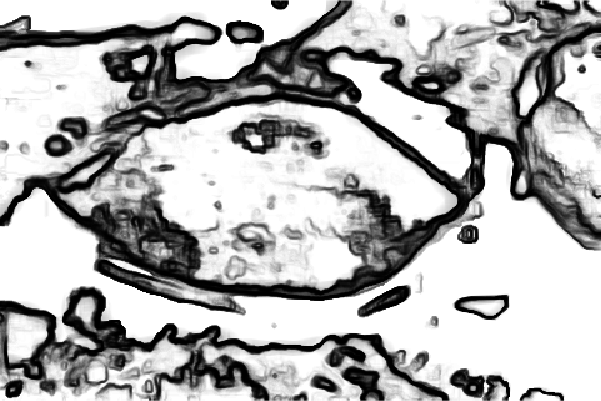}&
\includegraphics[width=0.27\textwidth]{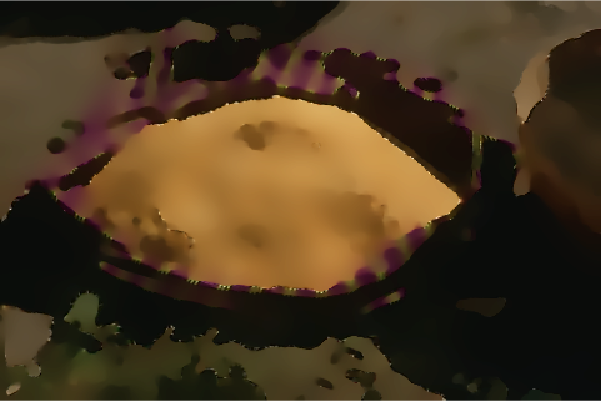}\\
\includegraphics[width=0.27\textwidth]{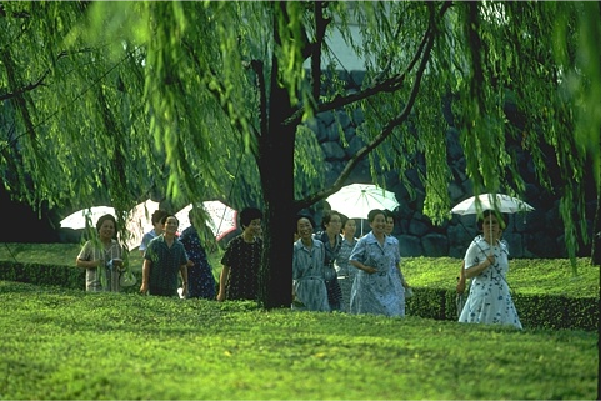}&
\includegraphics[width=0.27\textwidth, frame]{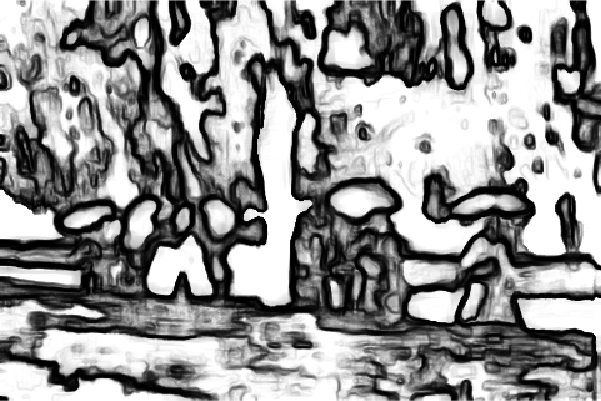}&
\includegraphics[width=0.27\textwidth]{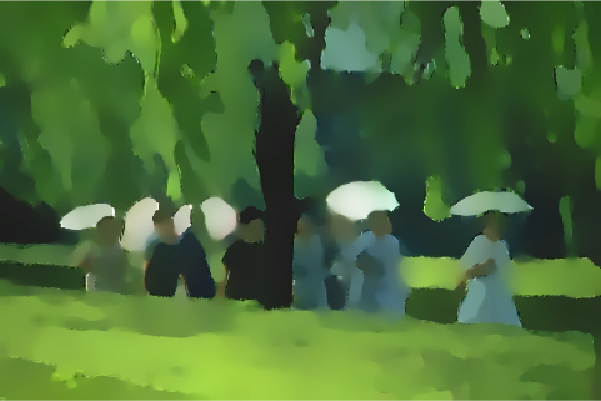}\\
\includegraphics[width=0.27\textwidth]{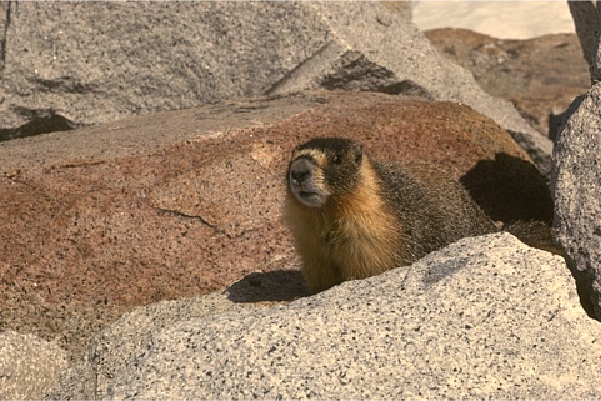}&
\includegraphics[width=0.27\textwidth, frame]{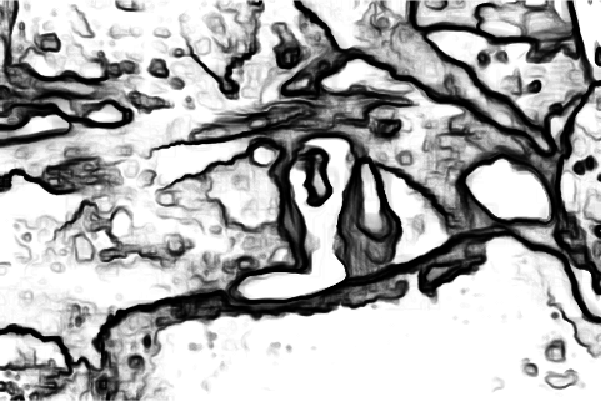}&
\includegraphics[width=0.27\textwidth]{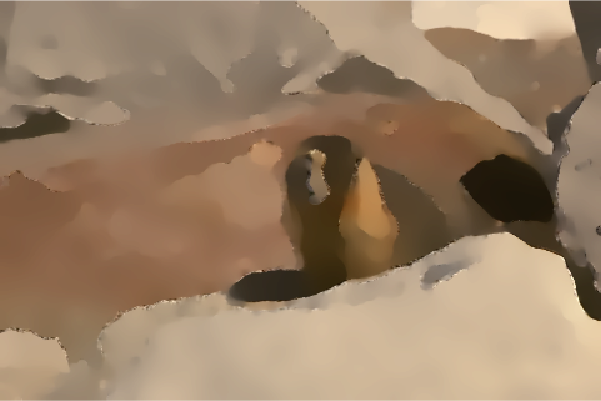}\\
\includegraphics[width=0.27\textwidth]{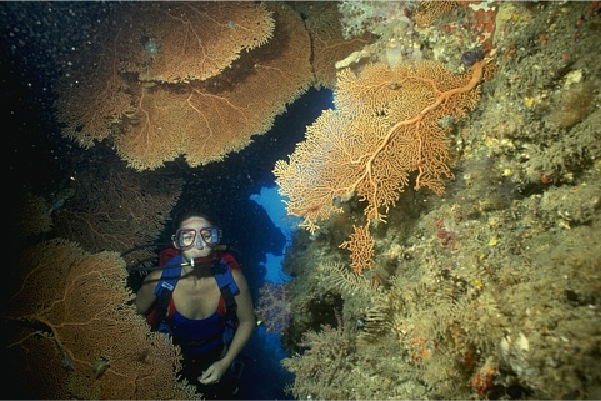}&
\includegraphics[width=0.27\textwidth, frame]{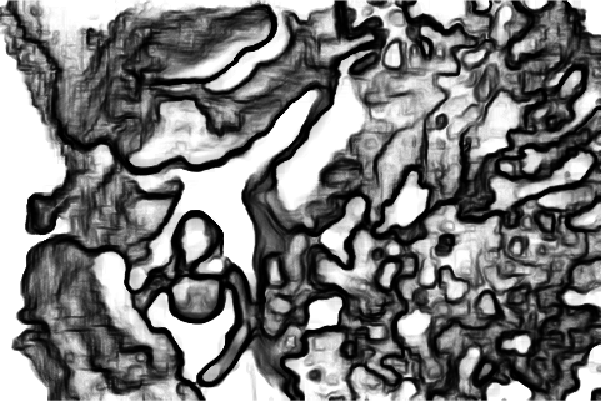}&
\includegraphics[width=0.27\textwidth]{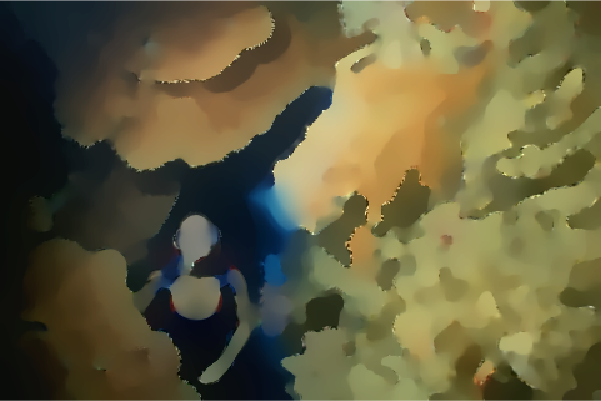}\\
\includegraphics[width=0.27\textwidth]{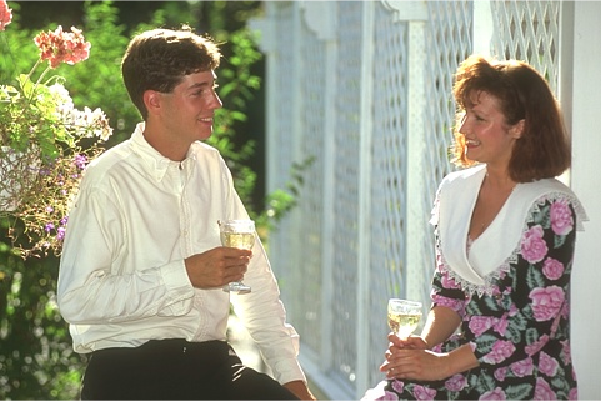}&
\includegraphics[width=0.27\textwidth, frame]{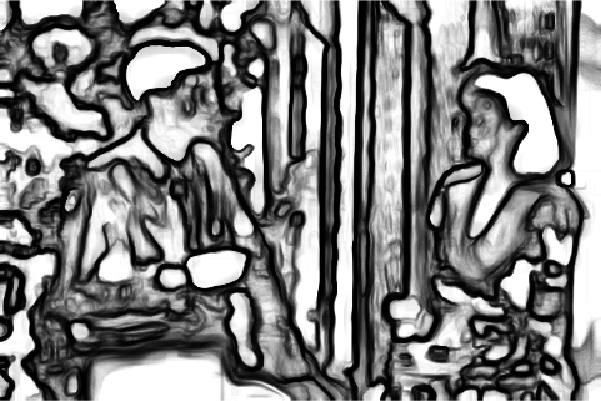}&
\includegraphics[width=0.27\textwidth]{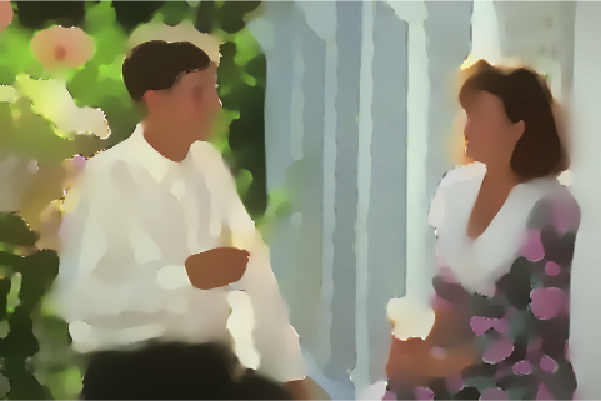}
    \end{tabular}
\caption{(a) The given image. (b) The edge function $g$ using $V$, which ignores the texture within regions.  (c) The segmentation result of \eqref{fnal:bright}. 
Textures of the images are well grouped as each region, e.g., in the top image, grains are identified as one region, and in the third row, grains on the rock are identified as one texture, and they are will separated from the fur of the animal, even when the colors are  similar.  }\label{fig:moreSeg}
\end{figure}
This method can be naturally extended to  image decomposition, and in Figure \ref{fig:decomp}, we present the remainder after the segmentation showing the details of the image. 
\begin{figure}
    \centering
\begin{tabular}{ccc}    
(a) & (b) & (c) \\
\includegraphics[width=0.27\textwidth]{images/86016.png}&
\includegraphics[width=0.27\textwidth]{images/86016_d.png}&
\includegraphics[width=0.27\textwidth]{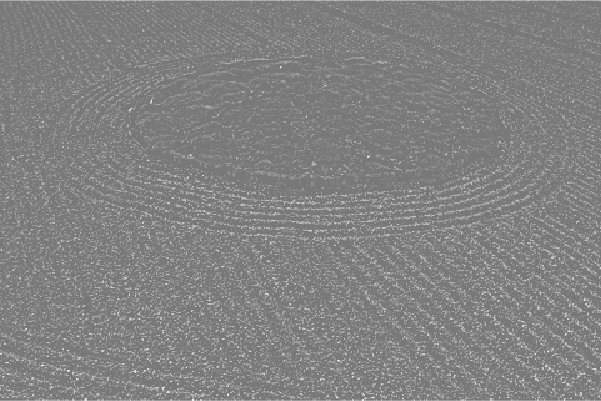}\\
\includegraphics[width=0.27\textwidth]{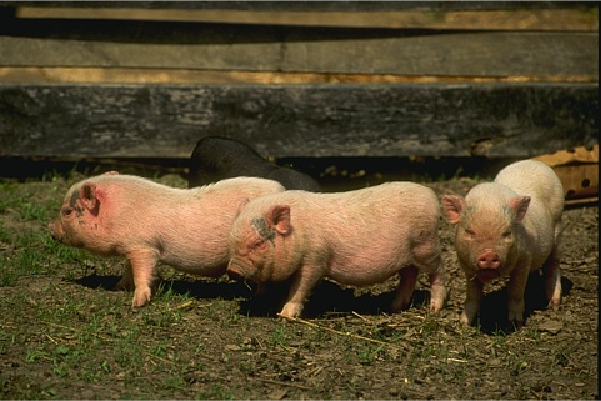}&		\includegraphics[width=0.27\textwidth]{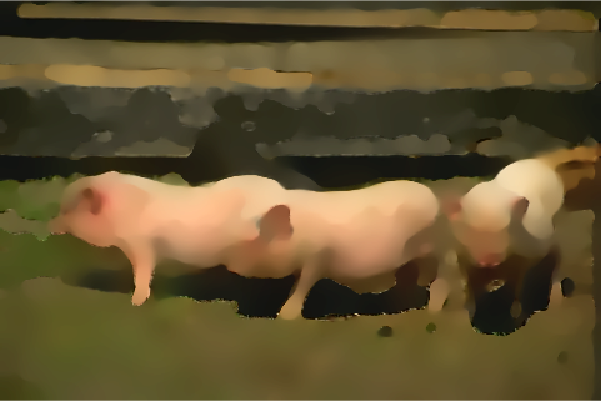}&
\includegraphics[width=0.27\textwidth]{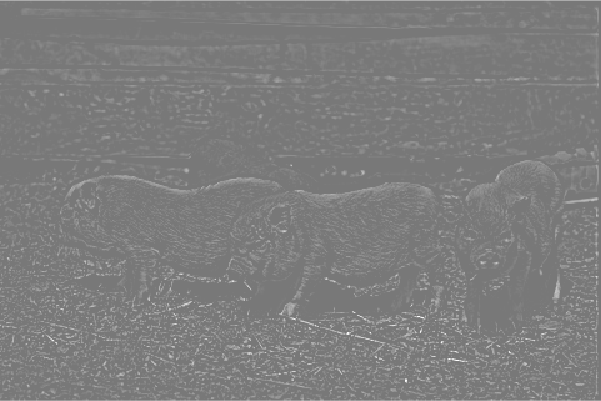} 
\end{tabular}
\caption{(a) The given image. (b) The segmentation result using \eqref{fnal:bright}. There is a natural extension to image decomposition and (c) shows the reminder, subtracting image (b) from (a), representing the details of the image.}\label{fig:decomp}
\end{figure}

In Appendix, we further discuss more details of the proposed method, e.g. proof of Theorem \ref{thm: expectation of R}, behavior of periodic texture, and junction refinement. 

\section{Concluding Remarks} \label{sec:conclusion}
We proposed a texture edge detection method which utilize patch based consensus.  We use the local patch and its response, to emphasize the similarities and the differences among textures, and segmentation of the patch response helps to locate the edge location clearly.  On the boundary of texture, local patch information and patch response is not as accurate to identify edge location, that we utilize the neighbor consensus to stabilize the process.   We statistically analyze when the texture can be separated, and derive necessary conditions to distinguish textures.  
The proposed method has three parameters which are not very sensitive to choose, and show how patch width and the size of texton is related.  This method is robust to different type of noise, and can handle multiple junctions. This method is training-free and filter-free approach.  We provided numerical details and various experiments which illustrate the properties of the proposed model.

In the future, one may consider refining and thinning the edge thickness, which will also improve image decomposition application in Figure \ref{fig:decomp}.  We can also consider using multi-scale approach to further separate more object related edges, e.g., using different scales of patch responses.  
We may improve the performance of TEP via utilizing a scheme with adaptive patch size which can handle more complicated real images. Also, different types of kernels can be used instead of squared euclidean distance when comparing image patches, in order to enhance the sensitivity to certain types of textures.

\bibliographystyle{plain}
\bibliography{references}

\begin{thebibliography}{10}

\bibitem{adler2009random}
Robert~J Adler and Jonathan~E Taylor.
\newblock {\em Random fields and geometry}.
\newblock Springer Science \& Business Media, 2009.

\bibitem{Mathai1992quadratic}
Serge B.~Provost A.M.~Mathai.
\newblock {\em Quadratic forms in random variables: theory and applications}.
\newblock M. Dekker, 1992.

\bibitem{bertasius2015deepedge}
Gedas Bertasius, Jianbo Shi, and Lorenzo Torresani.
\newblock Deepedge: A multi-scale bifurcated deep network for top-down contour
  detection.
\newblock In {\em Proceedings of the IEEE conference on computer vision and
  pattern recognition}, pages 4380--4389, 2015.

\bibitem{nonlocal}
Antoni Buades, Bartomeu Coll, and Jean-Michel Morel.
\newblock {Non-Local Means Denoising}.
\newblock {\em {Image Processing On Line}}, 1:208--212, 2011.

\bibitem{canny1986computational}
John Canny.
\newblock A computational approach to edge detection.
\newblock {\em IEEE Transactions on pattern analysis and machine intelligence},
  PAMI-8(6):679--698, 1986.

\bibitem{chan2001total}
Tony~F Chan, Sung~Ha Kang, and Jianhong Shen.
\newblock Total variation denoising and enhancement of color images based on
  the cb and hsv color models.
\newblock {\em Journal of Visual Communication and Image Representation},
  12(4):422--435, 2001.

\bibitem{active}
Tony~F Chan and Luminita~A Vese.
\newblock Active contours without edges.
\newblock {\em IEEE Transactions on image processing}, 10(2):266--277, 2001.

\bibitem{davies1980algorithm}
Robert~B Davies.
\newblock Algorithm as 155: The distribution of a linear combination of
  $\chi^2$ random variables.
\newblock {\em Applied Statistics}, pages 323--333, 1980.

\bibitem{efros1999texture}
Alexei~A Efros and Thomas~K Leung.
\newblock Texture synthesis by non-parametric sampling.
\newblock In {\em Proceedings of the seventh IEEE international conference on
  computer vision}, volume~2, pages 1033--1038. IEEE, 1999.

\bibitem{he2019bi}
Jianzhong He, Shiliang Zhang, Ming Yang, Yanhu Shan, and Tiejun Huang.
\newblock Bi-directional cascade network for perceptual edge detection.
\newblock In {\em Proceedings of the IEEE/CVF Conference on Computer Vision and
  Pattern Recognition}, pages 3828--3837, 2019.

\bibitem{hellinger1909neue}
Ernst Hellinger.
\newblock Neue begr{\"u}ndung der theorie quadratischer formen von
  unendlichvielen ver{\"a}nderlichen.
\newblock {\em Journal f{\"u}r die reine und angewandte Mathematik},
  1909(136):210--271, 1909.

\bibitem{hong2008scale}
Byung-Woo Hong, Stefano Soatto, Kangyu Ni, and Tony Chan.
\newblock The scale of a texture and its application to segmentation.
\newblock In {\em 2008 IEEE Conference on Computer Vision and Pattern
  Recognition}, pages 1--8. IEEE, 2008.

\bibitem{ilea2011image}
Dana~E Ilea and Paul~F Whelan.
\newblock Image segmentation based on the integration of colour--texture
  descriptors---a review.
\newblock {\em Pattern Recognition}, 44(10-11):2479--2501, 2011.

\bibitem{jain1991unsupervised}
Anil~K Jain and Farshid Farrokhnia.
\newblock Unsupervised texture segmentation using gabor filters.
\newblock {\em Pattern recognition}, 24(12):1167--1186, 1991.

\bibitem{jones2009local}
Peter~W Jones and Triet~M Le.
\newblock Local scales and multiscale image decompositions.
\newblock {\em Applied and Computational Harmonic Analysis}, 26(3):371--394,
  2009.

\bibitem{khawaled2019self}
Samah Khawaled and Yehoshua~Y Zeevi.
\newblock On the self-similarity of natural stochastic textures.
\newblock {\em arXiv preprint arXiv:1906.06768}, 2019.

\bibitem{lewis1992image}
Adrian~S Lewis and G~Knowles.
\newblock Image compression using the 2-d wavelet transform.
\newblock {\em IEEE Transactions on image Processing}, 1(2):244--250, 1992.

\bibitem{liu2004computational}
Yanxi Liu, Robert~T Collins, and Yanghai Tsin.
\newblock A computational model for periodic pattern perception based on frieze
  and wallpaper groups.
\newblock {\em IEEE transactions on pattern analysis and machine intelligence},
  26(3):354--371, 2004.

\bibitem{liu2017richer}
Yun Liu, Ming-Ming Cheng, Xiaowei Hu, Kai Wang, and Xiang Bai.
\newblock Richer convolutional features for edge detection.
\newblock In {\em Proceedings of the IEEE conference on computer vision and
  pattern recognition}, pages 3000--3009, 2017.

\bibitem{livens1997wavelets}
S~Livens, P~Scheunders, G~Van~de Wouwer, and D~Van~Dyck.
\newblock Wavelets for texture analysis, an overview.
\newblock 1997.

\bibitem{MartinFTM01}
D.~Martin, C.~Fowlkes, D.~Tal, and J.~Malik.
\newblock A database of human segmented natural images and its application to
  evaluating segmentation algorithms and measuring ecological statistics.
\newblock In {\em Proc. 8th Int'l Conf. Computer Vision}, volume~2, pages
  416--423, July 2001.

\bibitem{mehrotra1992gabor}
Rajiv Mehrotra, Kameswara~Rao Namuduri, and Nagarajan Ranganathan.
\newblock Gabor filter-based edge detection.
\newblock {\em Pattern recognition}, 25(12):1479--1494, 1992.

\bibitem{mumford1989optimal}
David~Bryant Mumford and Jayant Shah.
\newblock Optimal approximations by piecewise smooth functions and associated
  variational problems.
\newblock {\em Communications on pure and applied mathematics}, 1989.

\bibitem{pu2022edter}
Mengyang Pu, Yaping Huang, Yuming Liu, Qingji Guan, and Haibin Ling.
\newblock Edter: Edge detection with transformer.
\newblock In {\em Proceedings of the IEEE/CVF conference on computer vision and
  pattern recognition}, pages 1402--1412, 2022.

\bibitem{raad2018survey}
Lara Raad, Axel Davy, Agn{\`e}s Desolneux, and Jean-Michel Morel.
\newblock A survey of exemplar-based texture synthesis.
\newblock {\em Annals of Mathematical Sciences and Applications}, 3(1):89--148,
  2018.

\bibitem{unsupervised}
Berta Sandberg, Sung~Ha Kang, and Tony~F Chan.
\newblock Unsupervised multiphase segmentation: A phase balancing model.
\newblock {\em IEEE transactions on image processing}, 19(1):119--130, 2009.

\bibitem{seber2003linear}
George~AF Seber and Alan~J Lee.
\newblock {\em Linear regression analysis}, volume 330.
\newblock John Wiley \& Sons, 2003.

\bibitem{serra1992overview}
Jean Serra and Luc Vincent.
\newblock An overview of morphological filtering.
\newblock {\em Circuits, Systems and Signal Processing}, 11:47--108, 1992.

\bibitem{9093290}
Xavier Soria, Edgar Riba, and Angel Sappa.
\newblock Dense extreme inception network: Towards a robust cnn model for edge
  detection.
\newblock In {\em 2020 IEEE Winter Conference on Applications of Computer
  Vision (WACV)}, pages 1912--1921, 2020.

\bibitem{tuceryan1993texture}
Mihran Tuceryan and Anil~K Jain.
\newblock Texture analysis.
\newblock {\em Handbook of pattern recognition and computer vision}, pages
  235--276, 1993.

\bibitem{unser1995texture}
Michael Unser.
\newblock Texture classification and segmentation using wavelet frames.
\newblock {\em IEEE Transactions on image processing}, 4(11):1549--1560, 1995.

\bibitem{van1985texture}
Luc Van~Gool, Piet Dewaele, and Andr{\'e} Oosterlinck.
\newblock Texture analysis anno 1983.
\newblock {\em Computer vision, graphics, and image processing},
  29(3):336--357, 1985.

\bibitem{wang1990texture}
Li~Wang and Dong-Chen He.
\newblock Texture classification using texture spectrum.
\newblock {\em Pattern recognition}, 23(8):905--910, 1990.

\bibitem{xie15hed}
Saining "Xie and Zhuowen" Tu.
\newblock Holistically-nested edge detection.
\newblock In {\em Proceedings of IEEE International Conference on Computer
  Vision}, 2015.

\bibitem{xu2020structure}
Ruotao Xu, Yong Xu, and Yuhui Quan.
\newblock Structure-texture image decomposition using discriminative patch
  recurrence.
\newblock {\em IEEE Transactions on Image Processing}, 30:1542--1555, 2020.

\bibitem{zachevsky2016statistics}
Ido Zachevsky and Yehoshua Y~Josh Zeevi.
\newblock Statistics of natural stochastic textures and their application in
  image denoising.
\newblock {\em IEEE Transactions on Image Processing}, 25(5):2130--2145, 2016.

\end{thebibliography}

\begin{appendices}
\section{Proof of Theorem \ref{thm: expectation of R}. } \label{app:proof of T1}
\begin{proof}
Note that $\mathcal{R}(\mathbf{y};\mathbf{x})$ is a quadratic function of two random vectors $\vec{\mathcal{P}}(\mathbf{x})$ and $\vec{\mathcal{P}}(\mathbf{y})$, the expectation needs to be computed with double integral
\[
    \mathbb{E}\left(\mathcal{R}(\mathbf{y};\mathbf{x})\right) = \mathbb{E}_{\mathbf{x}}\left(\mathbb{E}_{\mathbf{y}|\mathbf{x}}\mathcal{R}(\mathbf{y};\mathbf{x})\right).
\]
To handle the conditional expectation $\mathbb{E}_{\mathbf{y}|\mathbf{x}}(\cdot)$, one needs the conditional distribution of $\vec{\mathcal{P}}(\mathbf{y})$, which is Gaussian with mean \eqref{eq:conditional expectation} and variance \eqref{eq:conditional variance}. Then 
\begin{align}
    \mathbb{E}_{\mathbf{y}|\mathbf{x}}\left(\|\vec{\mathcal{P}}(\mathbf{y}) - \vec{\mathcal{P}}(\mathbf{x})\|_2^2\right) &= \mathbb{E}_{\mathbf{y}|\mathbf{x}}\left(\vec{\mathcal{P}}^T(\mathbf{y})\vec{\mathcal{P}}(\mathbf{y}) - 2\vec{\mathcal{P}}^T(\mathbf{y})\vec{\mathcal{P}}(\mathbf{x}) + \vec{\mathcal{P}}^T(\mathbf{x})\vec{\mathcal{P}}(\mathbf{x})\right).\label{eq:expended quadratic}\\
    &= \mathrm{tr}\left(\Sigma_p(\mathbf{y;x})\right) + \vec{\mu}_p^T(\mathbf{y};\mathbf{x})\vec{\mu}_p(\mathbf{y};\mathbf{x}) -2\vec{\mu}_p^T(\mathbf{y};\mathbf{x})\vec{P}(\mathbf{x}) + \vec{\mathcal{P}}^T(\mathbf{x})\vec{\mathcal{P}}(\mathbf{x})\label{eq:marginalExpectation}
\end{align}
where Lemma \ref{lemma: expectation of random vector} is applied to the first term of the right hand side of \eqref{eq:expended quadratic}. To compute the expectation of \eqref{eq:marginalExpectation} with respect to $\vec{\mathcal{P}}(\mathbf{x})$, we have the following identities:
\begin{align*}
    \mathrm{tr}\left(\Sigma_p(\mathbf{y};\mathbf{x})\right) &= \mathrm{tr}\left(\Sigma_p\right) - \mathrm{tr}\left(\Sigma_p^{-1}\Sigma_{\mathrm{c}}^T(\tau)\Sigma_{\mathrm{c}}(\tau)\right)\\
    \mathbb{E}_\mathbf{x}\left(\vec{\mu}_p^T(\mathbf{y};\mathbf{x})\vec{\mu}_p(\mathbf{y};\mathbf{x})\right) &= \vec{\mu}_p^T\vec{\mu}_p + \mathrm{tr}\left(\Sigma_p^{-1}\Sigma_{\mathrm{c}}^T(\tau)\Sigma_{\mathrm{c}}(\tau)\right)\\
    \mathbb{E}_\mathbf{x}\left(\vec{\mu}_p^T(\mathbf{y};\mathbf{x})\vec{\mathcal{P}}(\mathbf{x}) \right) &= \vec{\mu}_p^T\vec{\mu}_p + \mathrm{tr}\left(\Sigma_{\mathrm{c}}(\tau)\right)\\
    \mathbb{E}_\mathbf{x}\left(\vec{\mathcal{P}}^T(\mathbf{x})\vec{\mathcal{P}}(\mathbf{x})\right) &= \vec{\mu}_p^T\vec{\mu}_p + \mathrm{tr}(\Sigma_p).
\end{align*}
With substitution, the expectation of $\mathcal{R}(\mathbf{y};\mathbf{x})$ is then given as
\begin{align}
    \mathbb{E}_{\mathbf{x}}\left(\mathbb{E}_{\mathbf{y}|\mathbf{x}}\mathcal{R}(\mathbf{y};\mathbf{x})\right) &= \frac{1}{d}\mathbb{E}_{\mathbf{x}}\left(\mathbb{E}_{\mathbf{y}|\mathbf{x}}\left(\|\vec{\mathcal{P}}(\mathbf{y}) - \vec{\mathcal{P}}(\mathbf{x})\|_2^2\right)\right)\notag\\
    &= \frac{1}{d}\left(2\mathrm{tr}\left(\Sigma_p\right) - 2\mathrm{tr}\left(\Sigma_c(\tau)\right) \right)
    = 2\sigma_p^2\left(1 - \exp(-\frac{\tau^2}{2l_p^2})\right)\notag.
\end{align}
\end{proof}

\section{Junction edge refinement}\label{AS:junctions}

For the edges near a junction point, the strength of $V$ may be weaker than straight edges as seem in Figure \ref{fig:junction gradual change}.  This is because the local patches at a  location $\textbf{x}$ which is near a junction point, may  observed another textures which is a bit further from the two immediate two texture edge near $\textbf{x}$ and confuse the segmentation.  In Figure \ref{fig:junction focus problem 2}, (b) shows this effect.  This can be improved by a simple dilation-erosion operation from mathematical morphology \cite{serra1992overview}.  By using line shaped structuring element, where the orientation of the line should be parallel to the existing edge direction, the edge function $V$ is improved as in Figure \ref{fig:junction focus problem 2} (c) and (d). 
\begin{figure}
    \centering
    \begin{tabular}{cccc}
        (a)&(b)&(c)&(d)\\
         \includegraphics[width = 0.22\textwidth, frame]{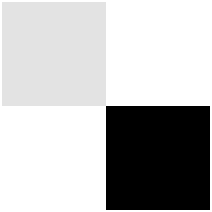}&
         \includegraphics[width = 0.22\textwidth]{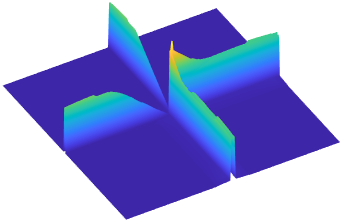}&
         \includegraphics[width = 0.22\textwidth]{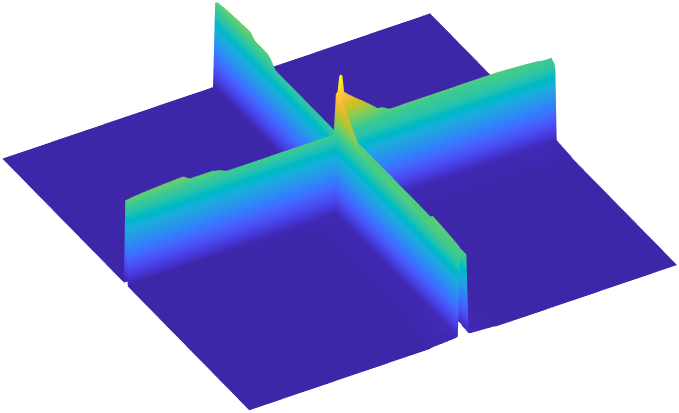} &
        \includegraphics[width = 0.22\textwidth]{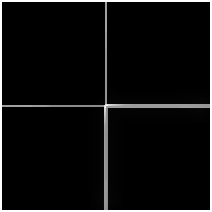}
    \end{tabular}
    \caption{(a) The given image with 4-junction. (b) The edge function $V$.  (c) Improved edge function $V$ using straight line direction enhancing. (d) TEP result with the improved $V$.}
    \label{fig:junction focus problem 2}
\end{figure}

\section{Periodic texture and its patch response}

Extending from the analysis in Section \ref{sec:analysis}, we numerically present the cases for periodic texture.  For periodic texture, it is interesting to notice that the variance of expectation is also strongly correlated to the period of the texture.  In Figure \ref{fig:periodic}, we show the distribution of $\mathbb{E}_{\mathbf{y}|\mathbf{x}}\left(\mathcal{R}(\mathbf{y}; \mathbf{x})\right)$ with varying patch width parameter $r$.  The variance is near zero whenever the patch width parameter $r$ matches the periods of the texture, while the general decreasing effect discussed in section \ref{sec:analysis} still exists.  
This is consistent with the work by Hong, et al.~\cite{hong2008scale}, where the authors observed that for a periodic image, some statistical distance measurement of the image patch vs the entire image vanishes whenever the patch width parameter is a multiple of the texture period.  In another work \cite{jones2009local}, authors measured the scale of the texture by applying time varying Gaussian kernel to the image, and observe when the averaging process has big jump, and use it to measure the scale. 

In this paper,  we choose relatively small $r$ while keeping the patch response consistent and stable.  For periodic texture, we can use these estimations to help find the scale of the texture. 
\begin{figure}
\centering
    \begin{tabular}{cc}
        (a)&(b)  \\
        \includegraphics[width= 0.27\textwidth]{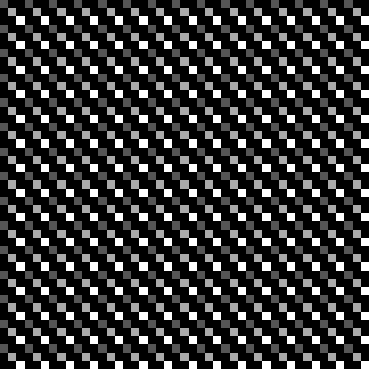}&
        \includegraphics[width= 0.35\textwidth]{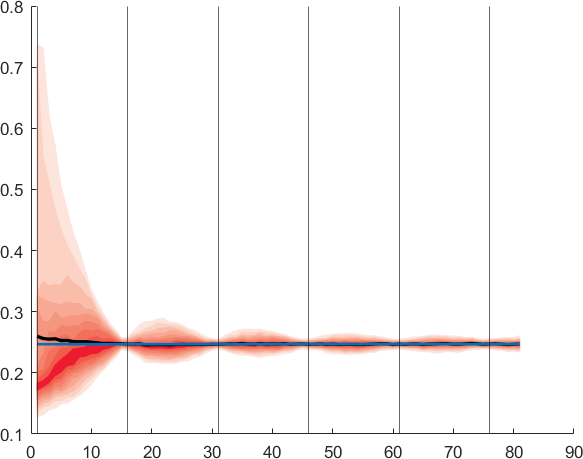}
    \end{tabular}				
\caption{(a) Given synthetic periodic texture. (b) Estimation of the distribution of $\mathbb{E}_{\mathbf{y}|\mathbf{x}}\left(\mathcal{R}(\mathbf{y}; \mathbf{x})\right)$. The black horizontal line in (b) shows empirical mean, and the blue horizontal line in (b) shows the theoretical mean. 
The dark red region shows the location of distribution median. The vertical lines indicate multiples of the texture period. Notice that there is a sharp decrease of the variance when the patch width parameter $r$ hits the period.}
\label{fig:periodic}
\end{figure}

\end{appendices}
\end{document}